\newtheorem{theorem}{Theorem}
\newtheorem{proposition}{Proposition}
\newtheorem{definition}{Definition}
\newtheorem{corollary}{Corollary}
\newtheorem{lemma}{Lemma}
\newtheorem{remark}{Remark}
\newcommand{\Sy}{\mathbb{S}}
\renewcommand{\Re}{\mathbb{R}}
\newcommand{\Ur}{\mathbb{U}}
\newcommand{\Vr}{\mathbb{V}}
\newcommand{\Wr}{\mathbb{W}}
\newcommand{\Xr}{\mathbb{X}}
\newcommand{\Yr}{\mathbb{Y}}
\newcommand{\cN}{\mathcal{N}}
\def\A{\mathbf{A}}
\def\B{\mathbf{B}}
\def\C{\mathbf{C}}
\def\D{\mathbf{D}}
\def\F{\mathbf{F}}
\def\G{\mathbf{G}}
\def\I{\mathbf{I}}
\def\J{\mathbf{J}}
\def\K{\mathbf{K}}
\def\M{\mathbf{M}}
\def\N{\mathbf{N}}
\def\P{\mathbf{P}}
\def\Q{\mathbf{Q}}
\def\R{\mathbf{R}}
\def\S{\mathbf{S}}
\def\T{\mathbf{T}}
\def\V{\mathbf{V}}
\def\W{\mathbf{W}}
\def\b{\boldsymbol{b}}
\def\e{\boldsymbol{e}}
\def\h{\boldsymbol{h}}
\def\q{\boldsymbol{q}}
\def\r{\boldsymbol{r}}
\def\s{\boldsymbol{s}}
\def\u{\boldsymbol{u}}
\def\w{\boldsymbol{w}}
\def\x{\boldsymbol{x}}
\def\y{\boldsymbol{y}}
\def\z{\boldsymbol{z}}
\def\ze{\mathbf{0}}
\def\bxi{\boldsymbol{\xi}}
\newcommand{\prox}{\mathrm{prox}}
\newcommand{\Span}{\mathrm{span}}
\DeclareMathOperator*{\argmin}{\arg\!\min}
\DeclareMathOperator*{\diag}{diag}
\DeclareMathOperator*{\minimize}{minimize}
\newcommand\xqed[1]{%
  \leavevmode\unskip\penalty9999 \hbox{}\nobreak\hfill
  \quad\hbox{#1}}
\newcommand{\fd}{\xqed{$\blacklozenge$}} %remark
\begin{document}

\date{}

\title{On the Contractivity of Plug-and-Play Operators}

\author{C.~D.~Athalye, K.~N.~Chaudhury, and B.~Kumar\thanks{C.~D.~Athalye is with the Department of Electrical \& Electronics Engineering, BITS Pilani, K.~K.~Birala Goa Campus, Goa: 403726, India. K.~N.~Chaudhury and B.~Kumar are with the Department of Electrical Engineering, Indian Institute of Science, Bengaluru: 560012, India. 
During this research work, C.~D.~Athalye was with the Department of Electrical Engineering, Indian Institute of Science, Bengaluru: 560012, India. 
Email: chirayua@goa.bits-pilani.ac.in, kunal@iisc.ac.in, bhartenduk@iisc.ac.in. The work of C.~D.~Athalye was supported by the Department of Science and Technology, Government of India, under Grant IFA17-ENG227, and K.~N.~Chaudhury was supported by research grants CRG/2020/000527 and STR/2021/000011 from the Science and Engineering Research Board, Government of India.}
}

\maketitle

\begin{abstract}
In plug-and-play (PnP) regularization, the proximal operator in algorithms such as ISTA and ADMM is replaced by a powerful denoiser. 
This formal substitution works surprisingly well in practice. In fact, PnP has been shown to give state-of-the-art results for various imaging applications. 
The empirical success of PnP has motivated researchers to understand its theoretical underpinnings and, in particular, its convergence. 
It was shown in prior work that for kernel denoisers such as the nonlocal means, PnP-ISTA provably converges under some strong assumptions on the forward model. 
The present work is motivated by the following questions: Can we relax the assumptions on the forward model? Can the convergence analysis be extended to PnP-ADMM? Can we estimate the convergence rate? 
In this letter, we resolve these questions using the contraction mapping theorem: (i) for symmetric denoisers, we show that (under mild conditions) PnP-ISTA and PnP-ADMM exhibit linear convergence; and (ii) for kernel denoisers, we show that PnP-ISTA and PnP-ADMM converge linearly for image inpainting.
We validate our theoretical findings using reconstruction experiments. 
\end{abstract}
 
%\begin{keywords}
%Plug-and-Play, symmetric denoisers, kernel denoisers, ISTA, ADMM, contraction mapping, convergence.
%\end{keywords}

\section{Introduction}

Image reconstruction tasks such as denoising, inpainting, deblurring, and superresolution can be modeled as a linear inverse problem: we wish to recover an image $\bxi \in \Re^n$ from noisy linear measurements $\b = \A \bxi + \w$,            
where $\A \in \Re^{m \times n}$ is the forward model and $\w \in \Re^m$ is white Gaussian noise. 
A standard approach is to solve the optimization problem
\begin{equation}
\label{eq:opt-prob}
\minimize_{\x \in \Re^n} \, f(\x) +  g(\x), \quad  f(\x)= \frac{1}{2} \|\A\x - \b\|_2^2,
\end{equation}
where the loss function $f$ is derived from the forward model and $g$ is an image regularizer \cite{ribes2008linear,bouman2022foundations}. 
The choice of regularizer has evolved from simple Tikhonov and Laplacian regularizers \cite{chan2005image} to wavelet, total-variation, dictionary, etc. \cite{rudin1992nonlinear,zoran2011learning,chan2005image}, and to more recent learning-based models \cite{zhang2017learning,jin2017deep,zhang2017beyond}.

It has been shown that powerful denoisers can also be used for image regularization \cite{romano2017little,sreehari2016plug,ahmad2020plug,teodoro2019image}.
One such method is Plug-and-Play (PnP) regularization, where we fix a proximal algorithm for solving \eqref{eq:opt-prob} such as ISTA or ADMM \cite{bauschke2017convex} and formally replace the proximal operator of $g$ with a denoiser \cite{sreehari2016plug,Teodoro2019PnPfusion,gavaskar2020plug,gavaskar2021plug}. 
%(Iterative Shrinkage-Thresholding Algorithm) (Alternating Direction Method of Multipliers) 
It is not apparent if this ad-hoc replacement can work in the first place, but PnP has been shown to work surprisingly well for many imaging and signal processing applications \cite{sreehari2016plug,zhang2017learning}, \cite{CWE2017,yazaki2019interpolation,gavaskar2023PnPCS,nagahama2021graph,kamilov2023PnPCI}. 
In fact, it has been shown that simple kernel denoisers such as nonlocal means \cite{buades2005non} can yield state-of-the-art results which are comparable with deep learning methods \cite{nair2022plug}.  
The empirical success of PnP has sparked interest in theoretical questions stemming from the algorithm. The most basic issue is convergence since the PnP iterations are not strictly derived within an optimization framework. In the absence of a convergence guarantee, the iterations can possibly diverge and result in a poor reconstruction.
Convergence of PnP has been studied in several works; we refer readers to \cite{gavaskar2021plug,cohen2021regularization} and the references therein for a comprehensive survey. 
In particular, PnP convergence for linear inverse problems is studied in \cite{sreehari2016plug,Teodoro2019PnPfusion,gavaskar2020plug,gavaskar2021plug}, \cite{cohen2021regularization,nair2021fixed,xu2020mmse,Ryu2019_PnP_trained_conv,tirer2018image,Dong2018_DNN_prior}.

In this letter, we revisit the convergence question for two popular PnP algorithms, PnP-ISTA and PnP-ADMM \cite{sreehari2016plug, Ryu2019_PnP_trained_conv,gavaskar2020plug}. 
To keep the analysis tractable, we restrict ourselves to linear inverse problems and linear denoisers  \cite{sreehari2016plug, Teodoro2019PnPfusion,gavaskar2021plug,nair2022plug}. 
Specifically, we show that the convergence results in \cite{gavaskar2020plug} can be extended and strengthened in various directions. Similar to \cite{gavaskar2020plug}, we exploit the linearity of the denoiser and the fact that PnP-ISTA and PnP-ADMM can be expressed as an iterative map $\x \mapsto \Q\x+\r$; however, our analysis is fundamentally different from \cite{gavaskar2020plug}. 
Our main results are summarized below:

\begin{enumerate}[label=(\roman*)]
\item For symmetric denoisers \cite{sreehari2016plug, Teodoro2019PnPfusion}, we give necessary and sufficient conditions for PnP-ISTA and PnP-ADMM to be contractive w.r.t. the spectral norm (Theorems \ref{thm:PnP-ISTA_contractivity} and \ref{thm:PnP-ADMM_contractivity}). 
Subsequently, using the contraction mapping theorem \cite{rudin1976principles}, we establish linear convergence of these PnP methods for applications such as inpainting, deblurring, and superresolution (Corollaries \ref{cor:PnP-ISTA-applications} and \ref{cor:PnP-ADMM_linear-convergence}).
To our knowledge, these are the first results on {\em linear convergence} of PnP, where the assumptions are practically verifiable.

\item For image inpainting using (nonsymmetric) kernel denoisers \cite{buades2005non}, we prove that $\|\Q\|_\D <1$ for both PnP-ISTA and PnP-ADMM (Theorem \ref{thm:PnP-kernel}), where $\|\cdot\|_\D$ is an appropriate matrix norm. It is shown in \cite{gavaskar2020plug} that the spectral radius $\varrho(\Q) <1$  for PnP-ISTA based on kernel denoisers, which is sufficient to guarantee convergence. 
However, the condition $\|\Q\|_\D <1$ allows us to establish linear convergence for PnP-ISTA (Theorem \ref{thm:PnP-kernel}), which is not guaranteed by $\varrho(\Q) <1$.
\end{enumerate}

This letter is organized as follows. We cover the necessary background in Section \ref{sec:background}. 
We state and discuss our main results in Section \ref{sec:results} and numerically validate them in Section \ref{sec:experiments}; the proofs are deferred to Appendix. 
%Finally, we conclude in Section \ref{sec:conc}.

\section{Background}
\label{sec:background}

In this section, we briefly cover some background material.
In the classical ISTA algorithm for solving \eqref{eq:opt-prob}, we start with an initialization $\x_0 \in \Re^n$ and perform the updates $\x_{k+1} = \prox_{\gamma g} \big(\x_k- \gamma \nabla \! f(\x_k) \big)$ for $k \geqslant 0$,
where $\prox_{g}\colon \Re^n \to \Re^n$ is the proximal operator of $g$, 
\begin{equation}
\label{eq:prox}
\prox_g(\x) := \argmin_{\z \in \Re^n} \left(\frac{1}{2} \|\x - \z\|_2^2 + g(\z)\right),
\end{equation}
and $\gamma > 0$ is the step size \cite{beck2009fast}. 
In PnP regularization, instead of going through $g$ and its proximal operator, we directly substitute $\prox_{\gamma g}$ by a denoiser. 
In this work, we consider linear denoisers and a quadratic loss function $f$ (see \eqref{eq:opt-prob}), i.e., we perform the updates as
\begin{equation}
\label{eq:pnp-ista}
\x_{k+1} = \W\big(\x_k- \gamma (\A^\top\!  \A\x_k - \A^\top \b)\big), 
\end{equation}
where $\W$ is a linear denoiser. Examples of such denoisers are Yaroslavsky \cite{yaroslavsky1985digital}, bilateral \cite{tomasi1998bilateral}, nonlocal means \cite{buades2005non,sreehari2016plug}, LARK \cite{takeda2007kernel}, GLIDE \cite{talebi2013global}, and GMM \cite{Teodoro2019PnPfusion}. We refer to \eqref{eq:pnp-ista} as PnP-ISTA. 
The above idea can be applied to other iterative algorithms such as PnP-ADMM \cite{sreehari2016plug,CWE2017}, which is based on the ADMM algorithm \cite{bauschke2017convex}. In PnP-ADMM, starting with $\y_0, \z_0 \in \Re^n$ and some fixed $\rho > 0$, the updates are performed as follows:
\begin{align}
\label{eq:pnp-admm}
\x_{k+1} &= \prox_{\rho f} (\y_k - \z_k),  \nonumber\\ 
\y_{k+1} &= \W (\x_{k+1} + \z_k),  \\ 
\z_{k+1} &= \z_k + \x_{k+1} - \y_{k+1}. \nonumber
\end{align}

In prior work \cite{gavaskar2020plug}, it was shown that PnP-ISTA converges if $\A$ and $\W$ satisfy some assumptions and $\gamma$ is small. 
More specifically, it was shown that (under some conditions on $\W$) convergence is guaranteed for inpainting if $\gamma \in (0,1)$. The analysis is based on the observation that \eqref{eq:pnp-ista} can be written as
\begin{equation}
\label{eq:PnP-iterates}
\x_{k+1} = \P\x_k + \q,
\end{equation}
where $\P= \W(\I-\gamma \A^\top\!  \A)$ and $ \q =\gamma \W\A^\top \b$. In particular, it was shown in \cite{gavaskar2020plug} that if at least one pixel is sampled from the averaging window (of the denoiser) around each pixel, then  $\varrho(\P) < 1$ for inpainting; 
subsequently, it follows that the sequence $\{\x_k\}$ in \eqref{eq:PnP-iterates} converges for an arbitrary initialization $\x_0$ \cite{meyer2000matrix}.
However, we show that the above assumption in \cite{gavaskar2020plug} can be dropped if we adopt a different analysis.

Similar to PnP-ISTA, we can express \eqref{eq:pnp-admm} in terms of a linear dynamical system. More precisely, we have the following observation by adapting the results in \cite{Ryu2019_PnP_trained_conv,nair2021fixed}. 
\begin{proposition}
\label{prop:DS_PnP-ADMM}
Let $\{(\x_k, \y_k, \z_k)\}_{k \geqslant 1}$ be the iterates generated by PnP-ADMM starting from $\y_0, \z_0 \in \Re^n$. Define the sequence $\{\u_k\}_{k \geqslant 1}$ as follows:  $\u_1 = \y_1 + \z_1$, and
\begin{equation}
\label{eq:uk}
\u_{k+1} = \frac{1}{2} \u_k + \frac{1}{2} (2\prox_{\rho f}-\I)(2\W-\I)(\u_k)
\end{equation}
for $k \geqslant 1$. Then $\y_k = \W\u_k$ for $k \geqslant 1$. 
\end{proposition}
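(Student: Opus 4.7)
The plan is to introduce the variable $\u_k := \y_k + \z_k$ and verify by direct substitution that it obeys the claimed Douglas--Rachford-type recursion, with $\y_k = \W\u_k$ emerging automatically from the $\y$-update of PnP-ADMM. This is essentially a bookkeeping calculation, so no analytical obstacles are expected; the care needed is just in aligning indices and handling the base case $k=1$ cleanly.

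First I would use the $\z$-update $\z_{k+1} = \z_k + \x_{k+1} - \y_{k+1}$ to rewrite $\u_{k+1} = \y_{k+1} + \z_{k+1} = \x_{k+1} + \z_k$. Combined with the $\y$-update $\y_{k+1} = \W(\x_{k+1}+\z_k)$, this immediately yields $\y_{k+1} = \W\u_{k+1}$ for all $k\geqslant 0$, so $\y_k = \W\u_k$ for $k\geqslant 1$. For the base case I would check that the definition $\u_1 = \y_1 + \z_1$ is consistent with the formula $\u_1 = \x_1 + \z_0$ obtained from the $\z_1$ update, which gives $\y_1 = \W(\x_1+\z_0) = \W\u_1$ without needing to reference $\y_0$.

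Next I would derive the recursion. From $\y_k = \W\u_k$ and $\u_k = \y_k+\z_k$, I get $\z_k = (\I-\W)\u_k$ and hence $\y_k - \z_k = (2\W-\I)\u_k$. Plugging this into the $\x$-update and using $\u_{k+1} = \x_{k+1} + \z_k$ gives
\begin{equation*}
\u_{k+1} \;=\; \prox_{\rho f}\bigl((2\W-\I)\u_k\bigr) + (\I-\W)\u_k.
\end{equation*}
I would then rewrite $(\I-\W)\u_k = \tfrac{1}{2}\u_k - \tfrac{1}{2}(2\W-\I)\u_k$ and factor $\prox_{\rho f}(\cdot) - \tfrac{1}{2}(\cdot) = \tfrac{1}{2}(2\prox_{\rho f}-\I)(\cdot)$ to arrive at
\begin{equation*}
\u_{k+1} = \tfrac{1}{2}\u_k + \tfrac{1}{2}(2\prox_{\rho f}-\I)(2\W-\I)(\u_k),
\end{equation*}
which matches \eqref{eq:uk}.

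The hardest part, if anything, is just presentational: making sure the induction starts at the right index, since the proposition quantifies over $k\geqslant 1$ while $\y_0$ is not defined. Using the identity $\u_1 = \x_1+\z_0$ from the $\z_1$ update sidesteps this cleanly, so the overall proof is a few lines of linear algebra.
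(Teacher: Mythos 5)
Your proof is correct. The paper itself does not spell out a proof of Proposition~\ref{prop:DS_PnP-ADMM}: it merely remarks that the result ``can be traced to'' the interpretation of ADMM as Douglas--Rachford splitting applied to the dual, and defers to the cited references. Your direct verification supplies exactly the computation that this appeal to DR splitting hides: the identity $\u_{k+1}=\x_{k+1}+\z_k$ from the $\z$-update, the consequence $\y_{k+1}=\W\u_{k+1}$ from the $\y$-update, and the substitution $\y_k-\z_k=(2\W-\I)\u_k$, $\z_k=(\I-\W)\u_k$ into the $\x$-update, followed by the purely notational rearrangement $(\I-\W)\u_k=\tfrac12\u_k-\tfrac12(2\W-\I)\u_k$. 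Your handling of the base case via $\u_1=\x_1+\z_0$ is also right and avoids any reference to an undefined $\y_0$-dependence. One presentational point: since \eqref{eq:uk} \emph{defines} $\{\u_k\}$ recursively from $\u_1=\y_1+\z_1$, the cleanest phrasing is an induction showing that $\y_k+\z_k$ satisfies the same recursion with the same initial value, hence coincides with $\u_k$; your argument contains all the needed steps, so this is only a matter of stating the induction explicitly. The payoff of your route is a self-contained, elementary proof that does not require the reader to unpack the dual DR correspondence; the paper's route buys brevity and situates the result within the known ADMM--DR equivalence.
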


The above result can be traced to the observation that ADMM can be interpreted as Douglas-Rachford splitting applied to the dual problem of \eqref{eq:opt-prob}, e.g., see \cite{bauschke2017convex}.
For problem \eqref{eq:opt-prob}, a direct calculation based on \eqref{eq:prox} gives
\begin{equation}
\label{eq:proxf}
\prox_{\rho f} (\x) = \big(\I+\rho \A^\top\!  \A\big)^{-1} \x + \h,
\end{equation}
where $\h \in \Re^n$ is an irrelevant parameter. 
Substituting \eqref{eq:proxf} in \eqref{eq:uk}, we obtain $\u_{k+1}=\R\u_k + \s$, where
\begin{subequations}
\label{eq:def-R}
\begin{equation}
\R =  \frac{1}{2}(\I + \J), \qquad \J = \F\V,
\end{equation}
\begin{equation}
\label{eq:def-VF}
\F = 2(\I + \rho \A^\top\A )^{-1} - \I,\qquad \V = (2\W-\I), 
\end{equation}
\end{subequations}
and $\s \in \Re^n$ is an irrelevant parameter. 

Our convergence analysis is built on the contraction mapping theorem \cite{rudin1976principles}. In particular, we use the following result.
\begin{proposition}
\label{prop:linear-convergence}
Let $\x_0 \in\Re^n$ and $\{\x_k\}_{k \geqslant 1}$ be the sequence generated as $\x_{k+1} = \Q\x_k + \r$. Then $\{\x_k\}$ converges linearly if $\Q$ is a contraction, i.e., if there exists a norm $\|\cdot\|$ on $\Re^n$ such that $\|\Q\|:= \max \,\{ \, \|\Q\x\| : \, \|\x\| =1\}< 1$.
\end{proposition}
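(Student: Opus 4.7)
}
The plan is to apply the Banach contraction mapping theorem to the affine map $T\colon\Re^n\to\Re^n$ defined by $T(\x)=\Q\x+\r$, and then read off a geometric (hence linear) rate of convergence from the contraction constant. Since $\|\Q\|<1$ by hypothesis, for every $\x,\y\in\Re^n$ we have
\begin{equation*}
\|T(\x)-T(\y)\| = \|\Q(\x-\y)\| \leqslant \|\Q\|\cdot\|\x-\y\|,
\end{equation*}
so $T$ is a strict contraction on the complete metric space $(\Re^n,\|\cdot\|)$. Completeness holds because $\|\cdot\|$ is equivalent to the Euclidean norm on the finite-dimensional space $\Re^n$.

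The first step is therefore to invoke the contraction mapping theorem to obtain a unique fixed point $\x^\star\in\Re^n$ satisfying $\x^\star=\Q\x^\star+\r$. The second step is to subtract this identity from the recursion $\x_{k+1}=\Q\x_k+\r$, yielding $\x_{k+1}-\x^\star=\Q(\x_k-\x^\star)$, and then iterate:
\begin{equation*}
\|\x_k-\x^\star\| \leqslant \|\Q\|^{k}\,\|\x_0-\x^\star\|, \qquad k\geqslant 0.
\end{equation*}
Since $\|\Q\|<1$, this gives geometric convergence of $\x_k$ to $\x^\star$ in the norm $\|\cdot\|$.

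The final step is to convert this into linear convergence with respect to whichever norm one prefers (typically the Euclidean norm used elsewhere in the paper). This is immediate from the equivalence of norms on $\Re^n$: there exist constants $0<c\leqslant C<\infty$ such that $c\|\cdot\|_2\leqslant\|\cdot\|\leqslant C\|\cdot\|_2$, so
\begin{equation*}
\|\x_k-\x^\star\|_2 \leqslant \frac{C}{c}\,\|\Q\|^{k}\,\|\x_0-\x^\star\|_2,
\end{equation*}
which is the standard definition of linear (Q-linear) convergence with rate $\|\Q\|$.

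There is no genuine obstacle here; the proof is a direct application of a standard theorem, and the only thing one has to be careful about is that the contraction is measured in a possibly non-Euclidean norm, which is handled by norm equivalence in the last step. The real work in the paper lies not in this proposition but in verifying, for the specific operators $\Q$ arising from PnP-ISTA and PnP-ADMM, that a suitable norm $\|\cdot\|$ exists for which $\|\Q\|<1$.
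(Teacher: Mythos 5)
Your proof is correct and follows exactly the route the paper intends: the paper states Proposition \ref{prop:linear-convergence} as a direct consequence of the contraction mapping theorem \cite{rudin1976principles} without writing out a proof, and your argument (fixed point from Banach's theorem, then $\x_{k+1}-\x^\star=\Q(\x_k-\x^\star)$ giving a geometric rate, with norm equivalence handling the change of norm) is the standard derivation being invoked.
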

%%\begin{proof}
%%Since $\|\Q\| < 1$, by the contraction mapping theorem, the sequence $\{\x_k\}$ converges to the fixed point, say $\x^*$, of the affine map $\Q(\cdot) + \r$. Therefore, 
%%\begin{equation*}
%%\x_{k+1} - \x^* = (\Q\x_k + \r) - (\Q\x^*+\r);
%%\end{equation*}
%%subsequently, $\|\x_{k+1} - \x^*\| < \|\Q\|\,\|\x_k - \x^*\|$.
%%\end{proof}

Recall that a sequence $\{\x_k\}$ is said to converge {\em linearly} to $\x^*$ if there exists $\theta \in (0,1)$ such that $\|\x_{k+1} - \x^*\| \leqslant\theta \, \|\x_k - \x^*\| $ for all $k$ \cite{bauschke2017convex,boyd2004convex}. More generally, we use the term ``linear convergence'' if there exist $c>0$ and $\theta \in (0,1)$ such that  $\|\x_{k} - \x^*\| \leqslant  c \, \theta^k$ for all $k$ \cite{bonnans2006numerical}. 

\section{Results and Discussion}
\label{sec:results}
In this section, we motivate and discuss our main results; the proofs are deferred to Appendix.
We first consider linear symmetric denoisers and then nonsymmetric kernel denoisers; the analysis and results are slightly different in each case.   

We use $\Sy^n$ to denote the space of $(n \times n)$ symmetric matrices. 
We denote the spectral radius of $\Q \in \Re^{n \times n}$ by $\varrho(\Q)$.
A matrix $\C$ is said to be nonnegative if all its entries are nonnegative.
We use $\e$ to denote the vector of all-ones, $\I$ for the identity matrix, and $\cN(\C)$ for the null space of $\C$. 
A matrix $\C$ is called stochastic if $\C$ is nonnegative and $\C\e=\e$. 

\subsection{Symmetric Denoisers}
\label{subsec:symmetric}

%We first make precise what is meant by a symmetric denoiser.
\begin{definition}
\label{def:symmetric-denoiser}
A linear denoiser $\W$ is said to be symmetric if $\W$ is stochastic and symmetric positive semidefinite. 
\end{definition}

Symmetric denoisers are discussed in \cite{milanfar2013symmetrizing,sreehari2016plug,Teodoro2019PnPfusion}. 
Note that the eigenvalues of a symmetric denoiser are in the interval $[0,1]$.
The above-cited instances of symmetric denoisers are in fact irreducible; subsequently, it follows from the Perron-Frobenius theorem \cite{meyer2000matrix} that $1$ is a simple eigenvalue of $\W$, i.e., its algebraic multiplicity is one. 

To motivate the main idea, we consider the PnP-ISTA operator $\P$ in \eqref{eq:PnP-iterates} and write it as
\begin{equation}
\label{eq:def-PG}
\P = \W\G, \quad \G = \I-\gamma \A^\top\A.
\end{equation}
Since $\W$ is a symmetric denoiser, $\|\W\|_2 = \varrho(\W) = 1$. On the other hand, if $0 < \gamma < 2/\varrho(\A^\top\! \A)$, then $\|\G\|_2 \leqslant1$. 
%In fact, when $\A$ has a nontrivial null space, $\|\G\|_2 = 1$.
Since neither $\W$ nor $\G$ is contractive (w.r.t. to the Euclidean norm), we cannot immediately conclude that $\P=\W \G$ is a contraction. 
However, we observed in numerical experiments that in the case of symmetric denoisers, $\|\P\|_2 < 1$ if $0 < \gamma < 2/\varrho(\A^\top\! \A)$.
This raises the following question: when is the composition of two symmetric nonexpansive operators contractive? An equivalent condition for the same is given next, which is a core result of this letter. 
\begin{lemma}
\label{lemma:contractivity}
Suppose $\M,\N \in \Sy^n$ are such that $\|\M\|_2 \leqslant1$ and $\|\N\|_2 \leqslant1$. 
Let $\Ur$ and $\Vr$ be the subspaces spanned by the eigenvectors of $\M$ and $\N$ corresponding to eigenvalues $\pm 1$. 
Then $\|\M\N\|_2 < 1$ if and only if $\Ur \cap \Vr = \{\ze\}$.
\end{lemma}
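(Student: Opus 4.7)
My plan is to control $\|\M\N\x\|_2$ through the chain of inequalities $\|\M\N\x\|_2 \le \|\N\x\|_2 \le \|\x\|_2$ and to characterize exactly when the intermediate slack can be closed, using the spectral structure of the symmetric contractions $\M$ and $\N$. Both matrices have eigenvalues in $[-1,1]$, and $\Ur$, $\Vr$ are precisely the sums of the eigenspaces corresponding to $|\lambda|=1$.

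For the direction $(\Leftarrow)$, I would suppose $\Ur\cap\Vr=\{\ze\}$ and, for contradiction, that $\|\M\N\|_2=1$. By compactness of the unit sphere there is a unit vector $\x$ with $\|\M\N\x\|_2=1$, which forces equality throughout the chain displayed above. Expanding $\N\x$ in an orthonormal eigenbasis $\{\v_j\}$ of $\M$ with eigenvalues $\mu_j\in[-1,1]$, the identity $\|\M(\N\x)\|_2^2=\sum_j \mu_j^2\,\langle\N\x,\v_j\rangle^2=\|\N\x\|_2^2$ forces the coefficients on every direction with $|\mu_j|<1$ to vanish, so $\N\x\in\Ur$. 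The identical argument applied to $\N$ yields $\x\in\Vr$. Since $\Vr$ is a direct sum of eigenspaces of $\N$ and therefore $\N$-invariant, $\N\x\in\Vr$ as well, placing $\N\x\in\Ur\cap\Vr=\{\ze\}$ and contradicting $\|\N\x\|_2=1$.

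For the direction $(\Rightarrow)$, given a nonzero $\y\in\Ur\cap\Vr$ normalized to $\|\y\|_2=1$, I would produce a unit vector $\x$ that saturates $\|\M\N\x\|_2=1$. The key observation is that the eigenvalues of $\N$ restricted to $\Vr$ lie in $\{+1,-1\}$, so $\N^2$ is the identity on $\Vr$; likewise $\M^2=\I$ on $\Ur$, and hence $\M|_\Ur$ and $\N|_\Vr$ are orthogonal isometries of the respective subspaces. Taking $\x:=\N\y$ then gives $\|\x\|_2=\|\y\|_2=1$ and $\N\x=\N^2\y=\y\in\Ur$, so $\|\M\N\x\|_2=\|\M\y\|_2=\|\y\|_2=1$. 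Combined with the submultiplicative bound $\|\M\N\|_2\le\|\M\|_2\|\N\|_2\le 1$, this yields $\|\M\N\|_2=1$.

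The main obstacle is picking the correct test vector for $(\Rightarrow)$: the naive choice $\x=\y$ generally fails because $\N\y$ need not land in $\Ur$ even when $\y$ does, so the $\M$-image of $\N\y$ can shrink. The remedy is to pre-apply $\N$ and exploit $\N^2|_\Vr=\I$ to ``reflect'' the vector back so that $\M$ subsequently acts on an element of $\Ur$, on which it is norm-preserving. The $(\Leftarrow)$ direction is a case-of-equality argument in the spectral bound and should be routine once the eigenspace decomposition is written out.
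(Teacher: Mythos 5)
Your proof is correct and follows essentially the same route as the paper's: the $(\Leftarrow)$ direction rests on the same equality-case analysis of $\|\M\w\|_2\leqslant\|\w\|_2$ (which the paper isolates as a separate proposition and you inline via the eigenbasis expansion), and your witness $\x=\N\y$ in the $(\Rightarrow)$ direction is exactly the paper's vector $\w=\y-\z$ obtained from the $\pm1$ eigenspace decomposition of $\y$ in $\Vr$. No gaps.
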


Using Lemma \ref{lemma:contractivity}, we can explain our empirical observation that $\|\P\|_2 < 1$.

\begin{theorem}
\label{thm:PnP-ISTA_contractivity}
Let $\W$ be a symmetric denoiser, and let $0 < \gamma < 2/\varrho(\A^\top\!  \A)$. Then $\|\P\|_2 < 1$ if and only if $\cN(\I-\W) \cap \cN(\A) = \{\ze\}$.
\end{theorem}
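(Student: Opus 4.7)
The plan is to apply Lemma \ref{lemma:contractivity} directly with $\M = \W$ and $\N = \G$, where $\G = \I - \gamma \A^\top\A$. First I would verify the hypotheses: $\W \in \Sy^n$ by definition of a symmetric denoiser, and $\G \in \Sy^n$ since $\A^\top\A$ is symmetric. Because $\W$ is PSD and stochastic, its eigenvalues lie in $[0,1]$, so $\|\W\|_2 \leqslant 1$. Since $\A^\top\A$ is PSD with eigenvalues in $[0,\varrho(\A^\top\A)]$, the eigenvalues of $\G$ lie in $[1-\gamma\varrho(\A^\top\A),\,1]$; the step-size bound $\gamma < 2/\varrho(\A^\top\A)$ gives $1-\gamma\varrho(\A^\top\A) > -1$, so $\|\G\|_2 \leqslant 1$.

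Next I would identify the subspaces $\Ur$ and $\Vr$ from the lemma. Since $\W$ has no negative eigenvalues, $\Ur$ is just the $+1$-eigenspace, i.e., $\Ur = \cN(\I - \W)$. For $\G$, eigenvalue $+1$ corresponds to eigenvalue $0$ of $\A^\top\A$, and eigenvalue $-1$ would correspond to eigenvalue $2/\gamma$ of $\A^\top\A$; the latter is ruled out by the strict inequality $\gamma < 2/\varrho(\A^\top\A)$. Hence $\Vr$ equals the $+1$-eigenspace of $\G$, which is $\cN(\A^\top\A) = \cN(\A)$.

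Lemma \ref{lemma:contractivity} then yields $\|\P\|_2 = \|\W\G\|_2 < 1$ if and only if $\Ur \cap \Vr = \{\ze\}$, i.e., $\cN(\I-\W) \cap \cN(\A) = \{\ze\}$, as required.

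The only delicate point is ensuring that $-1$ is excluded from the spectra of $\W$ and $\G$, so that the $\pm 1$-eigenspaces reduce cleanly to $\cN(\I-\W)$ and $\cN(\A)$; this is where the PSD property of $\W$ and the \emph{strict} step-size bound on $\gamma$ are essential. Everything else is a routine invocation of Lemma \ref{lemma:contractivity}, so the main conceptual work is really contained in that lemma rather than in the theorem itself.
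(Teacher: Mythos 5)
Your proof is correct and follows essentially the same route as the paper: both apply Lemma \ref{lemma:contractivity} with $\M=\W$ and $\N=\G$, verify $\|\W\|_2\leqslant 1$ and $\|\G\|_2\leqslant 1$ from the spectral bounds, and identify $\Ur=\cN(\I-\W)$ and $\Vr=\cN(\A)$. Your added care in checking that $-1$ is excluded from both spectra (so the $\pm 1$-eigenspaces collapse to the stated null spaces) is a point the paper passes over more quickly but is the same underlying argument.
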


As explained next, we get the following result from Theorem \ref{thm:PnP-ISTA_contractivity} and Proposition \ref{prop:linear-convergence}. 
\begin{corollary}
\label{cor:PnP-ISTA-applications}
Let $\W$ be a symmetric denoiser with $1$ as a simple eigenvalue and let $\gamma \in (0,2)$. Then PnP-ISTA is contractive and converges linearly for inpainting, deblurring, and superresolution.
\end{corollary}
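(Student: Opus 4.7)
The plan is to reduce Corollary~\ref{cor:PnP-ISTA-applications} directly to Theorem~\ref{thm:PnP-ISTA_contractivity} by checking, for each of the three applications, that (a) the step-size condition $0 < \gamma < 2/\varrho(\A^\top\!\A)$ is satisfied whenever $\gamma \in (0,2)$, and (b) the transversality condition $\cN(\I-\W) \cap \cN(\A) = \{\ze\}$ holds. Once both are in place, Theorem~\ref{thm:PnP-ISTA_contractivity} gives $\|\P\|_2 < 1$, and Proposition~\ref{prop:linear-convergence} (applied with the Euclidean norm) yields linear convergence of the iterates in \eqref{eq:PnP-iterates}.

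The first simplification I would carry out, before touching any specific inverse problem, is to characterize $\cN(\I-\W)$ using the simple-eigenvalue hypothesis. Since $\W$ is stochastic, $\W\e=\e$, so $\e \in \cN(\I-\W)$. Because $1$ is assumed to be a simple eigenvalue of $\W$ (and $\W$ is symmetric, hence diagonalizable), the geometric and algebraic multiplicities coincide and $\cN(\I-\W) = \Span\{\e\}$. Thus the condition $\cN(\I-\W) \cap \cN(\A) = \{\ze\}$ collapses to the single verifiable requirement $\A\e \neq \ze$. This is the key bookkeeping step that makes the rest of the argument uniform across applications.

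Next I would verify (a) and (b) for each application:
\begin{itemize}
\item \textbf{Inpainting.} Here $\A$ is a row-subsampling matrix built from rows of $\I$, so $\A^\top\!\A$ is diagonal with $0/1$ entries and $\varrho(\A^\top\!\A) \leqslant 1$, giving $2/\varrho(\A^\top\!\A) \geqslant 2$. Also $\A\e$ is a vector of all-ones over the sampled pixels, which is nonzero provided at least one pixel is observed.
\item \textbf{Deblurring.} Here $\A$ is a convolution matrix with a nonnegative blur kernel summing to one, so $\A\e=\e \neq \ze$, and $\|\A\|_2 \leqslant 1$ (e.g.\ via the Fourier representation, or via Gershgorin), hence $\varrho(\A^\top\!\A) \leqslant 1$.
\item \textbf{Superresolution.} Here $\A = \D\B$ with $\B$ a blurring operator as above and $\D$ a downsampling operator. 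Then $\A\e = \D\B\e = \D\e \neq \ze$, and $\|\A\|_2 \leqslant \|\D\|_2\,\|\B\|_2 \leqslant 1$, so again $\varrho(\A^\top\!\A) \leqslant 1$.
\end{itemize}
In all three cases, $\gamma \in (0,2) \subseteq (0, 2/\varrho(\A^\top\!\A))$, and $\A\e \neq \ze$ certifies (b).

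The main obstacle, such as it is, is purely modeling-level rather than analytical: one must argue cleanly that in each application the forward operator satisfies $\varrho(\A^\top\!\A) \leqslant 1$ and $\A\e \neq \ze$. No heavy estimate is needed, since the ``hard work'' of proving $\|\P\|_2 < 1$ from these two conditions is already done in Theorem~\ref{thm:PnP-ISTA_contractivity}. With the two conditions verified, Theorem~\ref{thm:PnP-ISTA_contractivity} gives contractivity of $\P$ in $\|\cdot\|_2$, and Proposition~\ref{prop:linear-convergence} immediately yields linear convergence, completing the proof.
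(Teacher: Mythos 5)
Your proposal is correct and follows essentially the same route as the paper: reduce $\cN(\I-\W)\cap\cN(\A)=\{\ze\}$ to $\A\e\neq\ze$ via the simple-eigenvalue hypothesis, verify $\varrho(\A^\top\!\A)\leqslant 1$ and $\A\e\neq\ze$ for each of the three forward models, and then invoke Theorem~\ref{thm:PnP-ISTA_contractivity} together with Proposition~\ref{prop:linear-convergence}. No gaps.
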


Following Theorem \ref{thm:PnP-ISTA_contractivity}, we need to check that $\varrho(\A^\top\!  \A) \leqslant 1$ and $\cN(\I-\W) \, \cap \, \cN(\A) = \{\ze\}$. This requires us to examine $\A$ for the applications under consideration.
In inpainting, we observe a subset of the pixels from the original image \cite{ribes2008linear}. If we represent the image by a vector in $\Re^n$, where $n$ is the number of pixels, and let $\Omega \subseteq \{1,2,\ldots,n\}$ be the observed pixels, then $\A$ for inpainting is the diagonal matrix 
\begin{equation}
\label{eq:A-inpaint}
\A_{ii} = 
\begin{cases}
1, \ \   &  i \in \Omega, \\
0, \ \ &  i \notin \Omega.
\end{cases}
\end{equation}
We assume that $\Omega \neq \emptyset$, which holds trivially in practice.
Thus, $\varrho(\A^\top\!  \A) = 1$ for inpainting.
For deblurring and superresolution, $\A = \B$ and $\A=\S\B$, respectively, where $\B \in \Re^{n \times n}$ is a blurring matrix corresponding to a lowpass filter and $\S \in \Re^{m \times n}$ is a subsampling matrix. 
In practice, a blur matrix $\B$ is circulant and stochastic \cite{ribes2008linear}, whereby $\|\B\|_2 = 1$.
Therefore, $\varrho(\A^\top\!  \A) =1$ for deblurring. As for superresolution, since $\|\S\|_2 = \|\B\|_2 = 1$, we have $\varrho(\A^\top\!  \A) = \|\A\|_2^2 \leqslant 1$. 
Thus, the range of $\gamma$ in Theorem \ref{thm:PnP-ISTA_contractivity} can be $(0, 2)$ for inpainting, deblurring, and superresolution.
 
Next, we argue that $\cN(\I-\W) \, \cap \, \cN(\A) = \{\ze\}$ in Corollary \ref{cor:PnP-ISTA-applications}. Since $1$ is assumed to be a simple eigenvalue of $\W$, we have $\cN(\I-\W) = \Span\{\e\}$. Note that $\A\e \neq 0$ for inpainting, deblurring, and superresolution. 
Therefore, for the applications under consideration in Corollary \ref{cor:PnP-ISTA-applications}, using Theorem \ref{thm:PnP-ISTA_contractivity}, we have $\|\P\|_2 < 1$ for $\gamma \in (0, 2)$. Subsequently, Corollary \ref{cor:PnP-ISTA-applications} follows from Proposition \ref{prop:linear-convergence}.
 
The contractivity of PnP-ISTA was established for deep denoisers in \cite{Ryu2019_PnP_trained_conv}, however, under a strong assumption that $f$ is strongly convex. 
Note that our analysis applies to a broader class of linear inverse problems such as inpainting, deblurring and superresolution for which $f$ is not strongly convex.
Also, the present assumption for PnP-ISTA convergence is significantly milder than that in \cite{gavaskar2020plug}. 
Recall that $1$ is indeed a simple eigenvalue of symmetric denoisers discussed in \cite{milanfar2013symmetrizing,sreehari2016plug,Teodoro2019PnPfusion}.
%(see the discussion after Definition \ref{def:symmetric-denoiser})
Moreover, the convergence interval $\gamma \in (0,2)$ in Corollary \ref{cor:PnP-ISTA-applications} is similar to that for classical ISTA \cite{beck2009fast}.

%\begin{remark}
%Notice that if $\gamma = 0$, then $\G=\I$; therefore, $\P=\W\G$ violates the equivalent condition in Lemma \ref{lemma:contractivity}. 
%\end{remark}

We next look at PnP-ADMM with a symmetric denoiser. The core result in this regard is the following.
\begin{theorem}
\label{thm:PnP-ADMM_contractivity}
Let $\W$ be a symmetric denoiser and $\rho > 0$. Then $\|\R\|_2 < 1$ if and only if $\cN(\I-\W) \cap \cN(\A) = \{\ze\}$.
\end{theorem}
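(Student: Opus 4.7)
The plan is to reduce $\|\R\|_2 < 1$ to the absence of nontrivial fixed points of $\J = \F\V$, and then to compute those fixed points explicitly in terms of $\W$ and $\A$.

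\emph{Step 1 (spectra of the factors).} Since $\W$ is a symmetric denoiser, $\V = 2\W-\I$ is symmetric with eigenvalues in $[-1,1]$. Since $\A^\top\A$ is symmetric positive semidefinite, the eigenvalues of $\F = 2(\I+\rho\A^\top\A)^{-1}-\I$ take the form $(1-\rho\lambda)/(1+\rho\lambda)$ for $\lambda$ an eigenvalue of $\A^\top\A$; these lie in $(-1,1]$, so $\F$ is symmetric with $\|\F\|_2 \leqslant 1$, and, crucially, $\F$ has no $-1$ eigenvalue while its $+1$-eigenspace equals $\cN(\A)$. Hence $\|\J\|_2 \leqslant 1$.

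\emph{Step 2 (averaging reduction).} I would show that $\|\R\|_2 < 1$ if and only if $\cN(\I-\J) = \{\ze\}$. Using $\|\J\|_2 \leqslant 1$,
\[
2\|\R\x\|_2 = \|\x+\J\x\|_2 \leqslant \|\x\|_2 + \|\J\x\|_2 \leqslant 2\|\x\|_2.
\]
Since Euclidean triangle equality $\|\a+\b\|_2 = \|\a\|_2+\|\b\|_2$ with $\|\a\|_2 = \|\b\|_2$ forces $\a = \b$ (via the Cauchy–Schwarz equality case), the chain is tight precisely when $\J\x = \x$. Therefore $\|\R\x\|_2 = \|\x\|_2$ if and only if $\x \in \cN(\I-\J)$.

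\emph{Step 3 (fixed points of $\J$).} I would prove $\cN(\I-\J) = \cN(\I-\W)\cap\cN(\A)$. The $\supseteq$ direction is direct: $\W\x = \x$ and $\A\x = \ze$ yield $\V\x = \x$ and $\F\x = \x$, so $\J\x = \x$. For $\subseteq$, assume $\F\V\x = \x$; then $\|\x\|_2 = \|\F\V\x\|_2 \leqslant \|\V\x\|_2 \leqslant \|\x\|_2$, so both inequalities are tight. Equality $\|\V\x\|_2 = \|\x\|_2$ together with the spectral theorem for $\V$ places $\x \in \cN(\V-\I) \oplus \cN(\V+\I)$. Equality $\|\F(\V\x)\|_2 = \|\V\x\|_2$, combined with the absence of a $-1$ eigenvalue for $\F$, places $\V\x \in \cN(\A)$ and gives $\F(\V\x) = \V\x$. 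Then $\x = \F\V\x = \V\x$, which kills the $\cN(\V+\I)$ component of $\x$, so $\x \in \cN(\V-\I) = \cN(\I-\W)$ and $\x = \V\x \in \cN(\A)$.

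\emph{Main obstacle.} The principal subtlety is that $\J = \F\V$ is not symmetric in general, so $\|\R\|_2$ is not read off from a single spectral computation; the Euclidean triangle-equality trick converts the norm bound into a fixed-point condition and bypasses this. A related point worth highlighting: applying Lemma \ref{lemma:contractivity} directly to $\F\V$ would characterize $\|\J\|_2 < 1$ via the strictly stronger condition $\cN(\A) \cap (\cN(\I-\W) \oplus \cN(\W)) = \{\ze\}$. The theorem's weaker hypothesis is available precisely because $\F$ has no $-1$ eigenvalue: any vector in $\cN(\W) \cap \cN(\A)$ gives $\J\x = -\x$ and hence $\R\x = \ze$, which cannot obstruct $\|\R\|_2 < 1$.
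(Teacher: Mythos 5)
Your proof is correct, but it takes a genuinely different route from the paper's. The paper proves the backward direction by a direct case analysis on $\x$: it splits according to whether $\x$ lies in $\Wr=\cN(\I-\W)\oplus\cN(\W)$ and whether $\V\x\in\cN(\A)$, and in the one delicate subcase it computes $\R\x$ explicitly (writing $\x=\y+\z$ with $\y\in\cN(\I-\W)$ and $\z\in\cN(\W)$, it shows $\R\x=\y$ and uses orthogonality to get strict norm decrease). You instead factor the argument into two reusable pieces: (a) for any $\J$ with $\|\J\|_2\leqslant 1$, the averaged operator $\R=\tfrac{1}{2}(\I+\J)$ satisfies $\|\R\|_2<1$ if and only if $\cN(\I-\J)=\{\ze\}$, via the Cauchy--Schwarz equality case (plus attainment of the operator norm on the unit sphere, which the paper also uses implicitly); and (b) a rigidity argument identifying $\cN(\I-\F\V)$ with $\cN(\I-\W)\cap\cN(\A)$, which is exactly the equality case of Proposition \ref{prop:contractivity} applied to each factor in turn. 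Both arguments hinge on the same phenomenon --- the only way $\|\R\x\|_2=\|\x\|_2$ can occur is $\J\x=\x$, and vectors in $\cN(\W)\cap\cN(\A)$ give $\J\x=-\x$, hence $\R\x=\ze$, so they cannot obstruct contractivity of $\R$ even though they force $\|\J\|_2=1$ --- but your version isolates it as a general principle rather than discovering it inside a subcase. Your closing observation is also correct and worth keeping: applying Lemma \ref{lemma:contractivity} directly to $\J=\F\V$ would characterize $\|\J\|_2<1$ by the strictly stronger condition $\cN(\A)\cap\bigl(\cN(\I-\W)\oplus\cN(\W)\bigr)=\{\ze\}$, which explains why the paper could not simply cite that lemma here. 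The paper's route is more concrete and self-contained; yours is more modular and extends verbatim to any averaged nonexpansive linear map.
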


As explained earlier, the condition in Theorems \ref{thm:PnP-ISTA_contractivity} and \ref{thm:PnP-ADMM_contractivity} is satisfied for inpainting, deblurring, and superresolution if $1$ is a simple eigenvalue of $\W$. Thus, as a consequence of Theorem \ref{thm:PnP-ADMM_contractivity}, we can conclude the following (see Remark \ref{rmk:PnP-ADMM_linear-convergence} in Appendix).
\begin{corollary}
\label{cor:PnP-ADMM_linear-convergence}
Let $\W$ be a symmetric denoiser for which $1$ is a simple eigenvalue. Then, for any $\rho > 0$, PnP-ADMM is contractive and converges linearly for inpainting, deblurring, and superresolution.
\end{corollary}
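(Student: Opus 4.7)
The plan is to derive the corollary as a direct consequence of Theorem \ref{thm:PnP-ADMM_contractivity} together with Proposition \ref{prop:DS_PnP-ADMM} and Proposition \ref{prop:linear-convergence}. By Theorem \ref{thm:PnP-ADMM_contractivity}, it suffices to verify the single intersection condition $\cN(\I-\W) \cap \cN(\A) = \{\ze\}$ for each of the three forward models, and then to argue that the contractivity of $\R$ yields linear convergence of the actual PnP-ADMM iterates $(\x_k, \y_k, \z_k)$, not merely of the auxiliary sequence $\{\u_k\}$ from Proposition \ref{prop:DS_PnP-ADMM}.

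For the intersection condition I would reuse verbatim the reasoning already given in the discussion after Corollary \ref{cor:PnP-ISTA-applications}: since $1$ is assumed to be a simple eigenvalue of $\W$ and $\W\e = \e$, we have $\cN(\I-\W) = \Span\{\e\}$, so the intersection is trivial iff $\A\e \neq \ze$. For inpainting this is immediate from \eqref{eq:A-inpaint} provided $\Omega \neq \emptyset$. For deblurring, $\A = \B$ is stochastic so $\B\e = \e \neq \ze$. For superresolution, $\A = \S\B$ with $\S$ a (nonzero) subsampling matrix, so $\A\e = \S\e \neq \ze$. In all three cases Theorem \ref{thm:PnP-ADMM_contractivity} then gives $\|\R\|_2 < 1$ for every $\rho > 0$.

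Given contractivity of $\R$, Proposition \ref{prop:linear-convergence} immediately implies that the sequence $\{\u_k\}$ generated by $\u_{k+1} = \R\u_k + \s$ converges linearly to the unique fixed point $\u^\ast$ of $\R$. The step that requires a little care, and which I expect to be the main (albeit routine) obstacle, is lifting this to linear convergence of the original iterates. Using Proposition \ref{prop:DS_PnP-ADMM} we have $\y_k = \W \u_k$, so $\|\y_k - \y^\ast\|_2 \leqslant \|\W\|_2 \, \|\u_k - \u^\ast\|_2$, giving linear convergence of $\{\y_k\}$. The update $\x_{k+1} = \prox_{\rho f}(\y_k - \z_k)$ is an affine map in the combined variable and, via the closed form \eqref{eq:proxf}, is Lipschitz; similarly $\z_{k+1} = \z_k + \x_{k+1} - \y_{k+1}$ is affine. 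Expressing $(\x_k, \z_k)$ as an affine function of $\u_{k-1}$ (as is done implicitly in the derivation preceding \eqref{eq:def-R}) therefore transfers the geometric rate from $\{\u_k\}$ to $\{\x_k\}$ and $\{\z_k\}$. This is the content the paper defers to Remark \ref{rmk:PnP-ADMM_linear-convergence} in the Appendix, and invoking it closes the proof.
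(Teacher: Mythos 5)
Your proposal is correct and follows essentially the same route as the paper: verify $\cN(\I-\W)=\Span\{\e\}$ and $\A\e\neq\ze$ for the three forward models, invoke Theorem \ref{thm:PnP-ADMM_contractivity} to get $\|\R\|_2<1$, and then transfer linear convergence of $\{\u_k\}$ to $\{\y_k\}$ via $\y_k=\W\u_k$ and $\|\W\|_2=1$, exactly as in Remark \ref{rmk:PnP-ADMM_linear-convergence}. Your additional observation that the affine dependence of $\x_k$ and $\z_k$ on $\u_{k-1}$ yields linear convergence of the full iterate triple is a small (correct) strengthening that the paper does not bother to spell out.
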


The convergence interval $\rho \in (0,\infty)$ in Theorem \ref{thm:PnP-ADMM_contractivity} is similar to standard convergence results for ADMM \cite{bauschke2017convex}.

\subsection{Kernel Denoisers}
\label{subsec:kernel}

We now turn to kernel denoisers \cite{yaroslavsky1985digital,lee1983digital,tomasi1998bilateral,buades2005non,sreehari2016plug,takeda2007kernel}. These are typically nonsymmetric, whereby Theorems \ref{thm:PnP-ISTA_contractivity} and \ref{thm:PnP-ADMM_contractivity} do not apply. First, we make precise what we mean by a kernel denoiser.
\begin{definition}
\label{def:kernel-denoiser}
Let $\K$ be nonnegative and symmetric positive semidefinite, and let $\D=\diag(\K\e)$. We call $\K$ a kernel matrix and $\W = \D^{-1} \K$ a kernel denoiser. We assume that $\K\e$ is a positive vector and $1$ is a simple eigenvalue of $\W$.
\end{definition}

The last two assumptions are met for most kernel denoisers where $\W$ is nonnegative and irreducible \cite{milanfar2013tour}.
In the archetype kernel denoiser, such as nonlocal means \cite{buades2005non}, $\K$ is derived from a Gaussian kernel acting on image patches. The precise definition of $\K$ is not relevant to our discussion; instead, we refer the reader to \cite{milanfar2013tour,milanfar2013symmetrizing} for precise definitions. We just need the following abstract property of a kernel denoiser (e.g., see \cite{gavaskar2021plug}): $\W$ is similar to a symmetric positive semidefinite matrix, and its eigenvalues are in $[0,1]$.  

It turns out that we generally cannot ensure that $\|\P\|_2 < 1$ or $\|\R\|_2 < 1$ for kernel denoisers (see Table \ref{tab:counter_examples}). However, for the inpainting problem, we can show that $\P$ is a contraction with respect to $\|\cdot\|_\D$, where $\D$ is as in Definition \ref{def:kernel-denoiser}. 

\begin{remark}
\label{rmk:D-norm}
The inner product and norm on $\Re^n$ w.r.t. a symmetric positive definite matrix $\D$ are defined as follows: 
\begin{equation*}
\label{eq:D-norm}
\langle \x,\y \rangle_\D := \x^\top \D \y, \qquad \|\x\|_\D := \langle \x, \x \rangle_\D^{1/2},
\end{equation*}
and $\|\Q\|_\D := \max \, \{ \|\Q\x\|_\D : \|\x\|_\D=1\}$ for $\Q \in \Re^{n \times n}$. \fd 
\end{remark}
\begin{theorem}
\label{thm:PnP-kernel}
Let $\W$ be a kernel denoiser. Consider the inpainting problem where $\A$ is given by \eqref{eq:A-inpaint}.  
If $0 < \gamma < 2$, then $\|\P\|_\D < 1$; consequently, PnP-ISTA converges linearly. 
On the other hand, if $\W$ is nonsingular and $\rho > 0$, then $\|\R\|_\D < 1$; thus, PnP-ADMM converges linearly. 
\end{theorem}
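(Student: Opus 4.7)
The plan is to reduce both contractivity claims to Lemma \ref{lemma:contractivity} by a similarity transform that converts $\D$-norm statements into spectral norm statements. For any matrix $\M$, set $\widetilde{\M} := \D^{1/2}\M\D^{-1/2}$; since $\x \mapsto \D^{1/2}\x$ is an isometry from $(\Re^n,\|\cdot\|_\D)$ onto $(\Re^n,\|\cdot\|_2)$, we get $\|\M\|_\D = \|\widetilde{\M}\|_2$, and the transform is multiplicative, $\widetilde{\M\N}=\widetilde{\M}\widetilde{\N}$. The key observation is that $\widetilde{\W} = \D^{-1/2}\K\D^{-1/2}$ is symmetric positive semidefinite with eigenvalues in $[0,1]$, and, because $\W\e=\e$ with $1$ a simple eigenvalue, the $1$-eigenspace of $\widetilde{\W}$ is $\Span\{\D^{1/2}\e\}$. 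Since the diagonal entries of $\D^{1/2}$ are strictly positive (as $\K\e$ is positive) and $\Omega$ is nonempty, $\A \D^{1/2}\e \neq \ze$, so $\Span\{\D^{1/2}\e\} \cap \cN(\A) = \{\ze\}$.

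For PnP-ISTA, the inpainting $\A$ satisfies $\A^\top\A = \A$ and commutes with $\D^{1/2}$, so $\widetilde{\P} = \widetilde{\W}\,\widetilde{\G}$ with $\widetilde{\G} = \I - \gamma\A$. The matrix $\widetilde{\G}$ is symmetric with eigenvalues in $\{1,\,1-\gamma\} \subset (-1,1]$ for $\gamma \in (0,2)$, so its $\pm 1$-eigenspace is exactly $\cN(\A)$; likewise the $\pm 1$-eigenspace of $\widetilde{\W}$ is $\Span\{\D^{1/2}\e\}$. Applying Lemma \ref{lemma:contractivity} to $\widetilde{\W}$ and $\widetilde{\G}$ yields $\|\widetilde{\P}\|_2 < 1$, i.e., $\|\P\|_\D < 1$, and Proposition \ref{prop:linear-convergence} then gives linear convergence.

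For PnP-ADMM, I would rewrite $\widetilde{\R} = \tfrac{1}{2}(\I + \widetilde{\J})$ with $\widetilde{\J} = \widetilde{\F}\,\widetilde{\V}$, so that by the triangle inequality $\|\widetilde{\R}\|_2 \leqslant \tfrac{1}{2}(1+\|\widetilde{\J}\|_2)$, reducing matters to showing $\|\widetilde{\J}\|_2 < 1$. The factor $\widetilde{\V} = 2\widetilde{\W} - \I$ is symmetric with spectrum in $[-1,1]$; the value $-1$ is excluded because nonsingularity of $\W$ forces the eigenvalues of $\widetilde{\W}$ to be strictly positive, so its $\pm 1$-eigenspace is again $\Span\{\D^{1/2}\e\}$. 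The factor $\widetilde{\F} = 2(\I+\rho\A)^{-1} - \I$ is diagonal, with entries equal to $1$ on indices $i\notin\Omega$ and to $(1-\rho)/(1+\rho) \in (-1,1)$ on indices $i\in\Omega$, so its $\pm 1$-eigenspace is $\cN(\A)$. A second application of Lemma \ref{lemma:contractivity} yields $\|\widetilde{\J}\|_2 < 1$, hence $\|\R\|_\D < 1$, and Proposition \ref{prop:linear-convergence} concludes linear convergence.

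The main obstacle is the post-similarity bookkeeping: one must recognise that the $1$-eigenvector of $\widetilde{\W}$ is $\D^{1/2}\e$ rather than $\e$ and verify it is not annihilated by $\A$, and one must rule out the eigenvalue $-1$ for every other factor appearing in $\widetilde{\P}$ and $\widetilde{\J}$. The nonsingularity hypothesis on $\W$ in the ADMM case is inserted precisely to exclude $-1$ as an eigenvalue of $\widetilde{\V}$; without it, the $-1$-eigenspace of $\widetilde{\V}$ would coincide with $\cN(\A)$ on the pixels outside $\Omega$ and the intersection condition of Lemma \ref{lemma:contractivity} could fail. Once these identifications are in place, the geometric argument behind Theorems \ref{thm:PnP-ISTA_contractivity} and \ref{thm:PnP-ADMM_contractivity} transfers unchanged to the kernel/inpainting setting.
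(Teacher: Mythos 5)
Your proposal is correct and follows essentially the same route as the paper: conjugate by $\D^{1/2}$ to turn the $\|\cdot\|_\D$ claims into spectral-norm claims, exploit that $\G$ and $\F$ are diagonal for inpainting so the transform only symmetrizes $\W$ into $\W_0=\D^{-1/2}\K\D^{-1/2}$, identify the $\pm 1$-eigenspaces (with $\Span\{\D^{1/2}\e\}$ for $\W_0$ and $\cN(\A)$ for $\G$, $\F$), and apply Lemma \ref{lemma:contractivity} twice, using nonsingularity of $\W$ to exclude $-1$ from $\sigma(2\W_0-\I)$.
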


We note that the nonsingularity assumption of $\W$ in Theorem \ref{thm:PnP-kernel} holds for nonlocal means \cite{nair2022plug}. 
Also, note that the use of $\|\cdot\|_\D$ in Theorem \ref{thm:PnP-kernel} is crucial in guaranteeing contractivity. In fact, as reported in Table \ref{tab:counter_examples}, we have observed in numerical experiments that $\|\P\|_2$ and $\|\R\|_2$ can indeed exceed $1$.

\begin{table}[htbp]
\caption{Inpainting of the Barbara image with $0.1\%$ samples: norms of PnP-ISTA and PnP-ADMM operators with the NLM denoiser.}
\begin{center}
\begin{tabular}{|c|c|c|c|c|}
\hline
parameter & \multicolumn{2}{c|}{PnP-ISTA} & \multicolumn{2}{c|}{PnP-ADMM} \\
\cline{2-3} \cline{4-5}
\textbf{$(\gamma / \rho)$} & \textbf{\textit{$\|\P\|_2$}}& \textbf{\textit{$\|\P\|_\D$ }}& \textbf{\textit{$\|\R\|_2$}} & \textbf{\textit{$\|\R\|_\D$}}  \\ \hline
0.25 & 1.0167 & 0.9998 & 1.0168 & 0.9998 \\ \hline
0.50 & 1.0165 & 0.9996 & 1.0163 & 0.9999 \\ \hline
0.75 & 1.0164 & 0.9996 & 1.0165 & 0.9998 \\ \hline
\end{tabular}
%\vspace{-4mm}
\end{center}
\label{tab:counter_examples}
\end{table}

\begin{remark}
It follows from Remark \ref{rmk:D-norm} that $\|\P\|_\D$ is the largest singular value of $\D^\frac{1}{2} \P \D^{-\frac{1}{2}}$. Thus, we can compute $\|\P\|_\D$ and $\|\R\|_\D$ using the power method \cite{matrixcomputations}. \fd
%For numerically efficient computations, it helps to treat $\P$ and $\R$ in terms of operators involving $\W$ and $\A$ (see \eqref{eq:def-R} and \eqref{eq:def-PG}).
\end{remark}

\section{Numerical Results}
\label{sec:experiments}

We numerically validate our contractivity and linear convergence results for three restoration applications: inpainting, deblurring, and superresolution. 
For all the experiments, we have made some fixed choices. We have used nonlocal means as a kernel denoiser \cite{gavaskar2021plug}. 
For symmetric denoisers, we have used DSG-NLM \cite{sreehari2016plug} (a form of symmetrized NLM) and the GMM denoiser \cite{Teodoro2019PnPfusion}.  
%The denoisers are derived from the Barbara image. 
We have randomly sampled $30\%$ pixels for inpainting; the blur matrix $\B$ is a uniform $11 \times 11$ blur for deblurring and superresolution, and $\S$ is uniform $2\mbox{x}$-subsampling for superresolution. 
The codes of our experiments are available at https://github.com/Bhartendu-Kumar/PnP-Conv.

\begin{figure}
\centering
\includegraphics[width=0.47\linewidth]{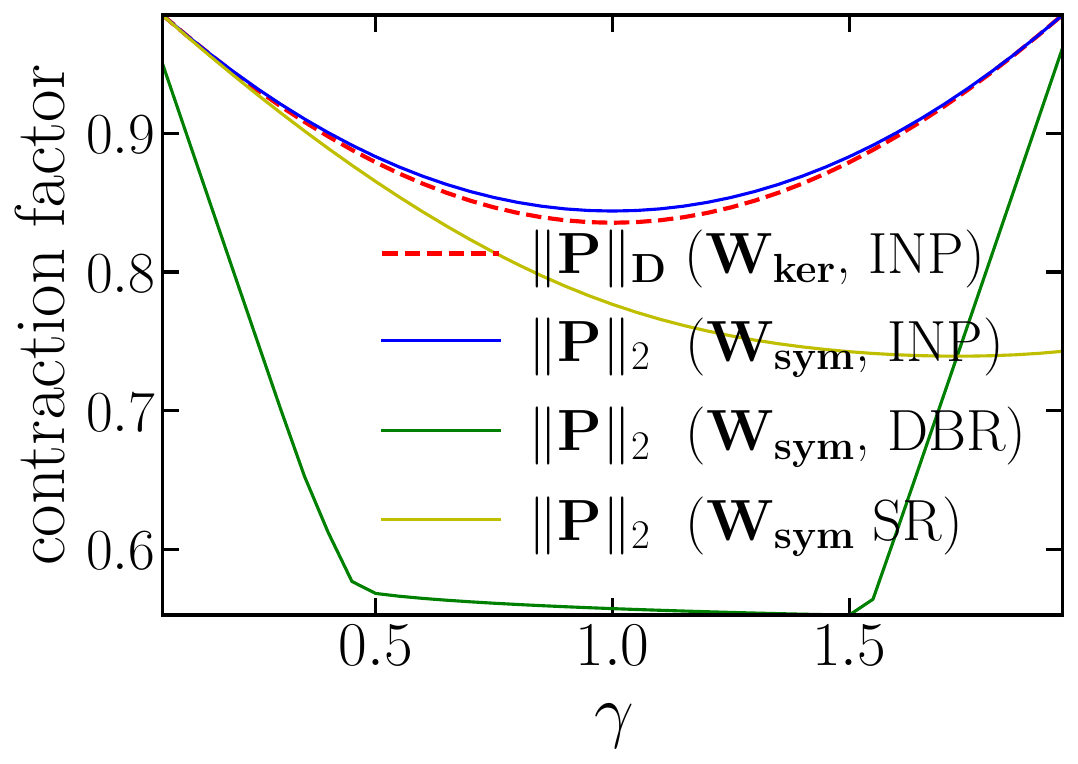} 
\includegraphics[width=0.47\linewidth]{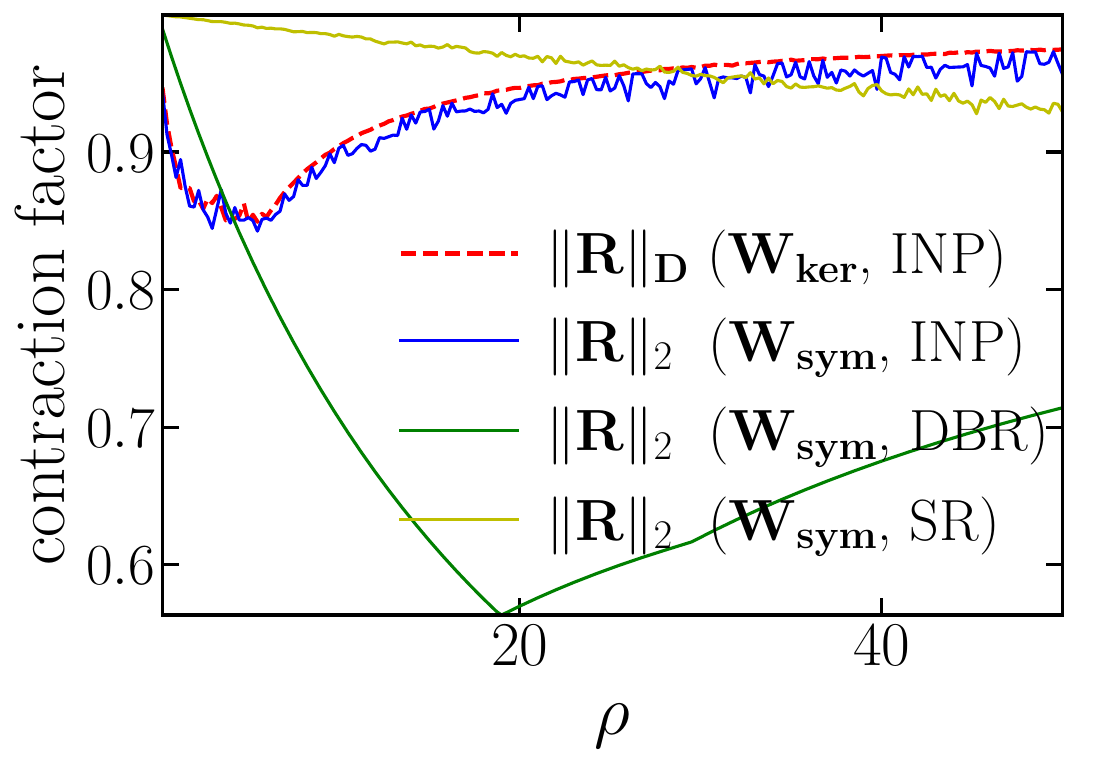}
\caption{Contraction factors for PnP-ISTA (left) and PnP-ADMM (right) for inpainting (INP), deblurring (DBR), and superresolution (SR); $\W_{\mbox{sym}}$ is the symmetric DSG-NLM denoiser \cite{sreehari2016plug} and $\W_{\mbox{ker}}$ is the NLM denoiser \cite{gavaskar2021plug}.}
\label{fig:plots}
\end{figure}

\begin{figure}
\centering
\includegraphics[width=1.0\linewidth]{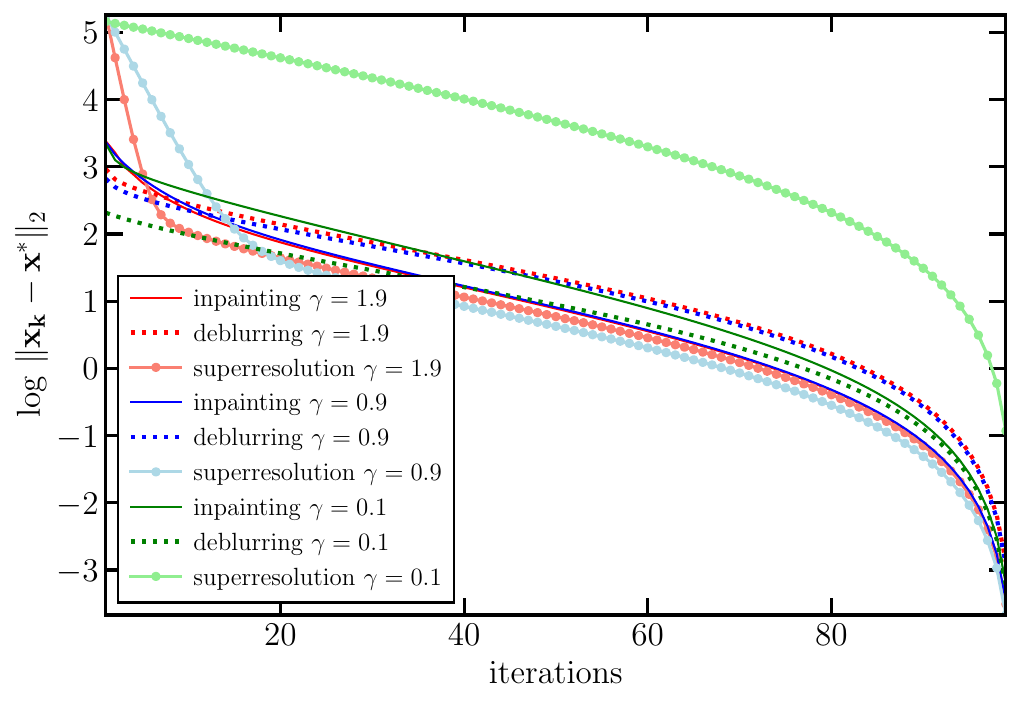} 
\caption{PnP-ISTA iterates for inpainting, deblurring, and superresolution, using the symmetric DSG-NLM denoiser \cite{sreehari2016plug} and for different step size $\gamma$; $\x^*$ is the iterate after 100 iterations. }
\label{fig:convergence}
\end{figure}

\begin{figure}
\centering
\subfloat[Observed]{\includegraphics[width=0.32\linewidth]{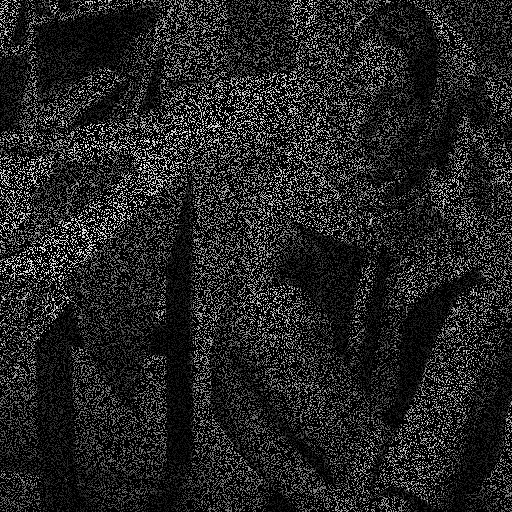}} \hspace{0.1mm}
\subfloat[PnP-ISTA]{\includegraphics[width=0.32\linewidth]{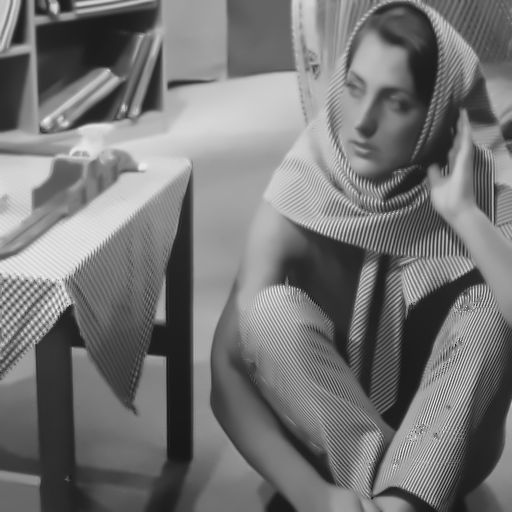}} \hspace{0.1mm}
\subfloat[PnP-ADMM.]{\includegraphics[width=0.32\linewidth]{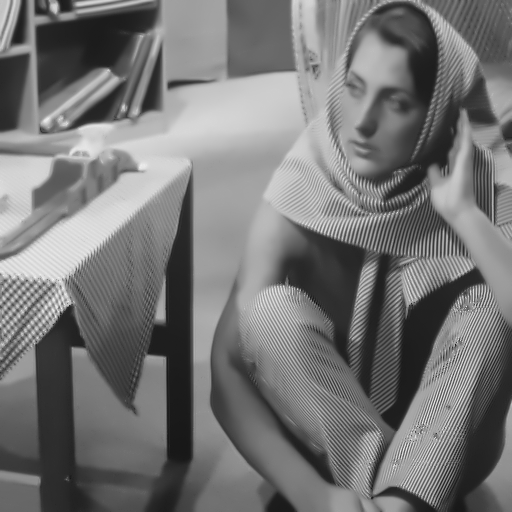}}
\vspace{2mm}
\subfloat[Observed]{\includegraphics[width=0.32\linewidth]{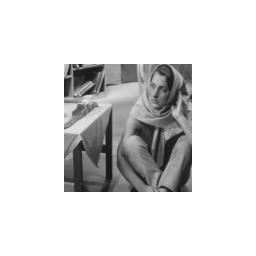}} \hspace{0.1mm}
\subfloat[PnP-ISTA]{\includegraphics[width=0.32\linewidth]{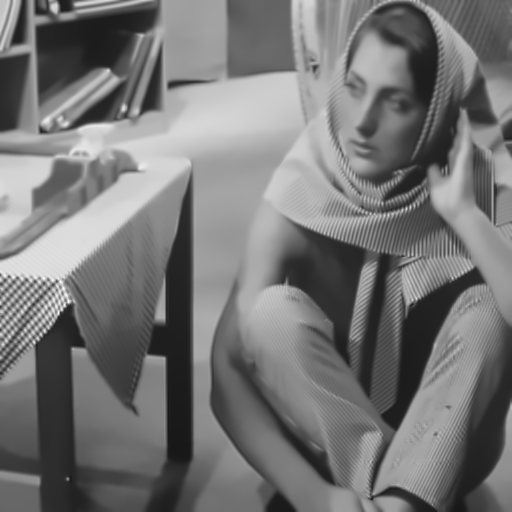}} \hspace{0.1mm}
\subfloat[PnP-ADMM]{\includegraphics[width=0.32\linewidth]{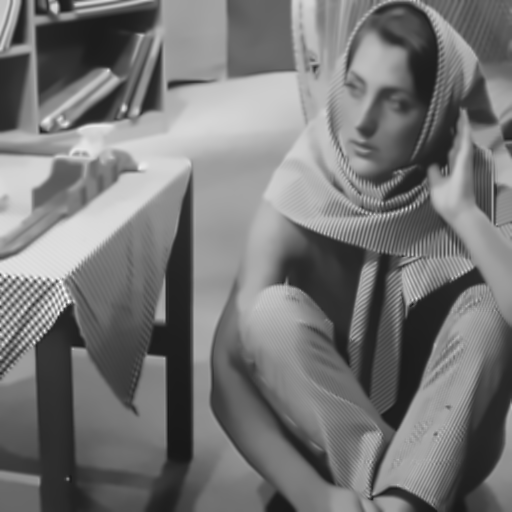}}
\caption{Inpainting (top) and $2\mbox{x}$-superresolution (bottom) using the symmetric DSG-NLM denoiser \cite{sreehari2016plug}. PSNR: (a) $23.23\,\si{dB}$, (b) $30.22\,\si{dB}$, (c) $30.25\,\si{dB}$; (d) $8.40\,\si{dB}$, (e)  $25.56\,\si{dB}$,  (f) $25.57\,\si{dB}$.}
\label{fig:reconstructions}
\end{figure}

\begin{figure}
\centering
\subfloat[$\y_0 =\,\mbox{all-zero image}$]{\includegraphics[width=0.32\linewidth]{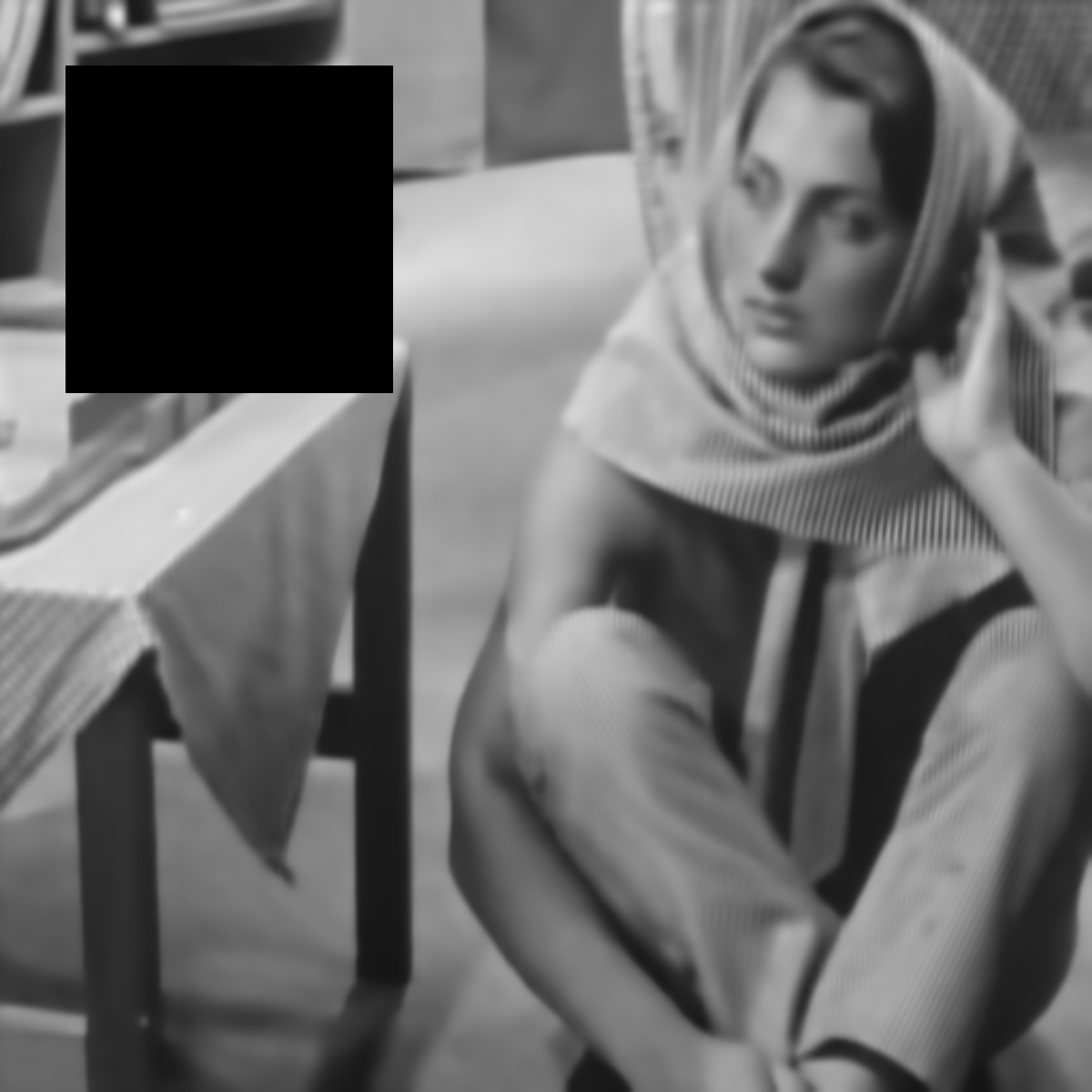}} \hspace{0.1mm}
\subfloat[$\y_0 = \,\mbox{all-one image}$]{\includegraphics[width=0.32\linewidth]{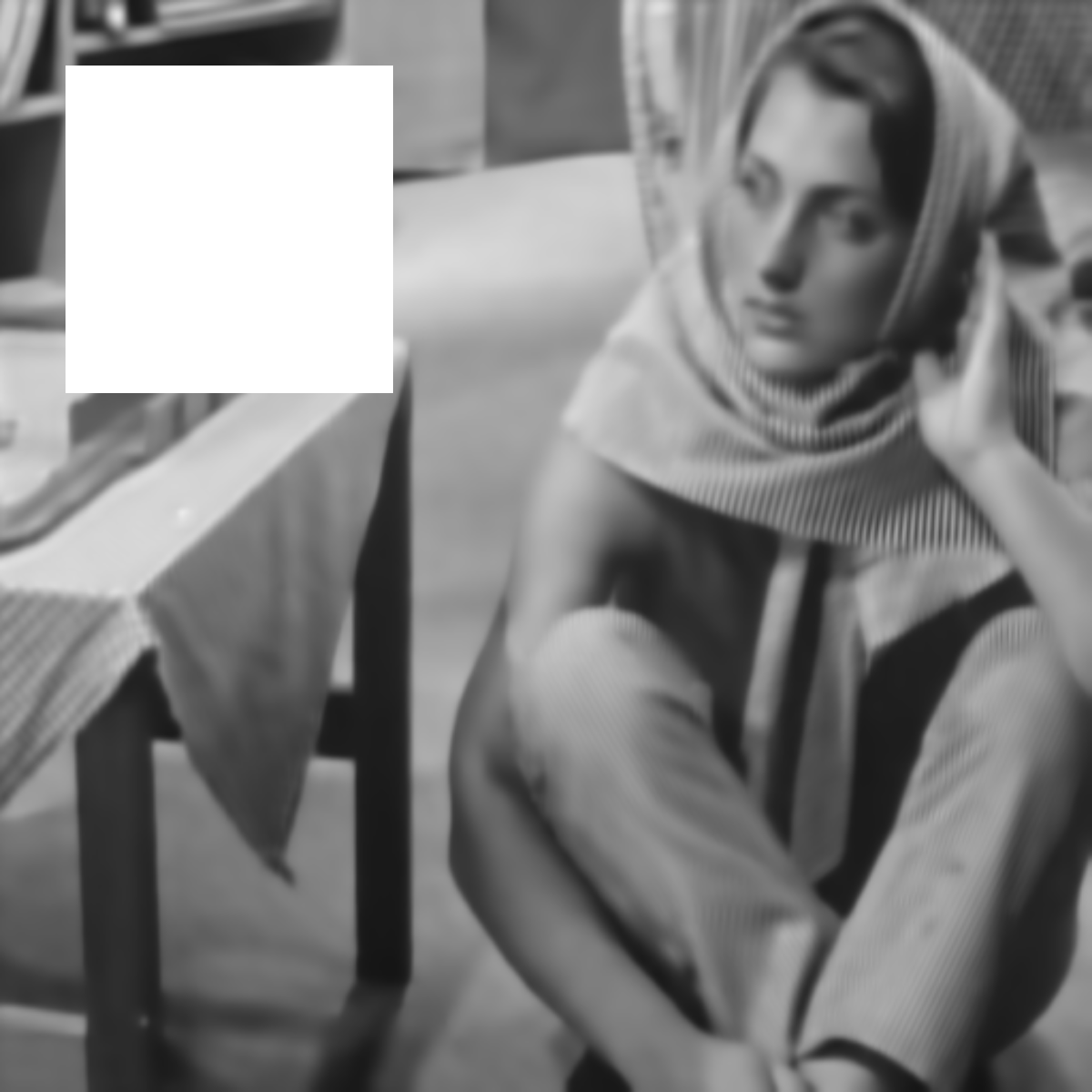}} \hspace{0.1mm}
\subfloat[$\y_0 = \,\mbox{white noise}$]{\includegraphics[width=0.32\linewidth]{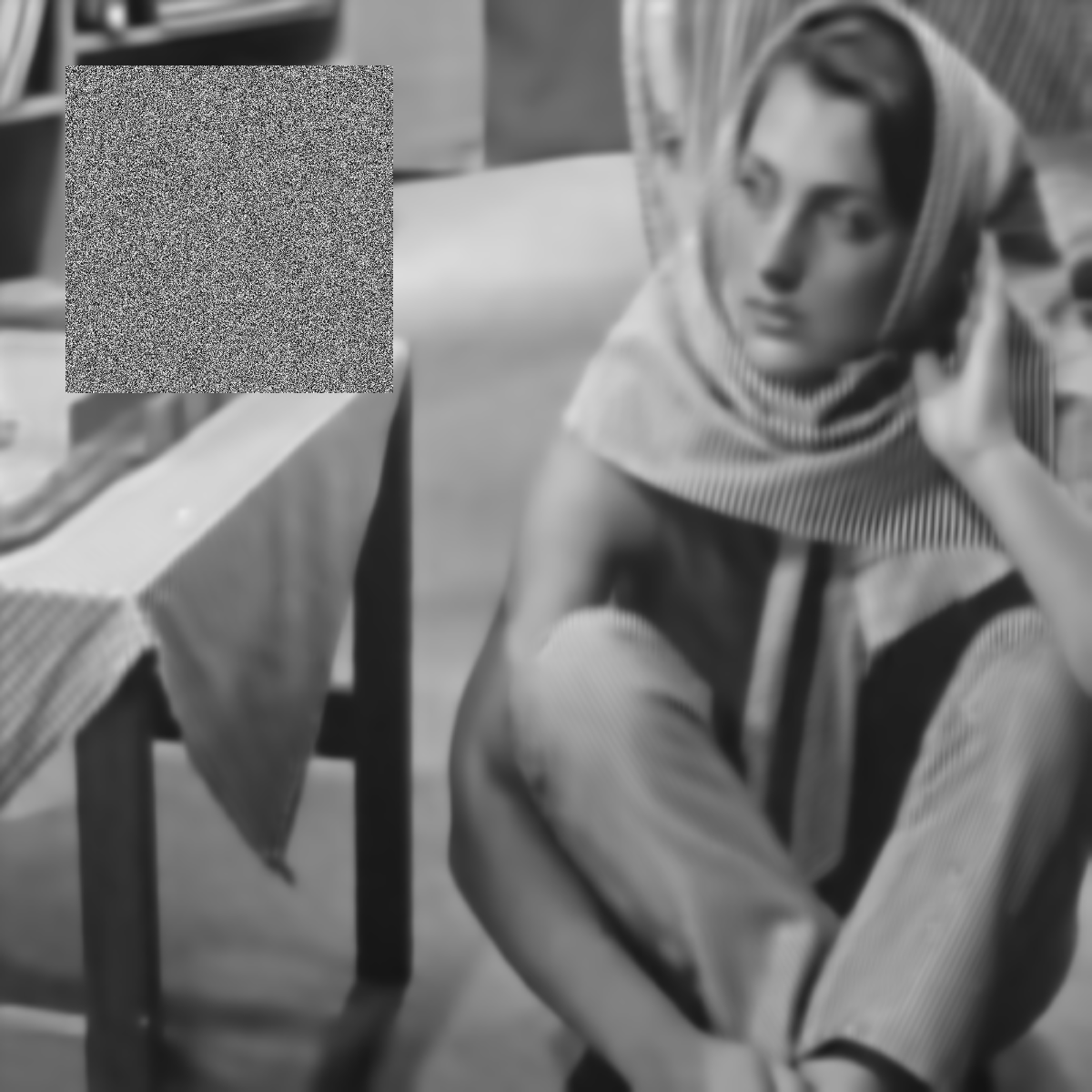}}
\caption{Deblurring results for PnP-ADMM using the symmetric GMM denoiser \cite{Teodoro2019PnPfusion}. PSNR for three different initializations (see insets):  (a) $24.04\,\si{dB}$, (b) $24.04\,\si{dB}$, (c) $24.04\,\si{dB}$.}
\label{fig:three-initializations}
\end{figure}

Fig.~\ref{fig:plots} validates the contractivity of PnP-ISTA and PnP-ADMM for inpainting, deblurring, and superresolution with a symmetric denoiser (Corollaries \ref{cor:PnP-ISTA-applications} and \ref{cor:PnP-ADMM_linear-convergence}). Furthermore, it also validates the contractivity of PnP-ISTA and PnP-ADMM for inpainting using a kernel denoiser (Theorem \ref{thm:PnP-kernel}).

In Fig.~\ref{fig:convergence}, we demonstrate linear convergence of PnP-ISTA with a symmetric denoiser, where the experiments have been performed on the Barbara image. As asserted in Corollary \ref{cor:PnP-ISTA-applications}, the residual $\|\x_k-\x^*\|_2$ is seen to monotonically converge to zero. We also observed similar behavior for PnP-ADMM with a symmetric denoiser.

In Fig.~\ref{fig:reconstructions}, we show the reconstructions for inpainting and superresolution using the symmetric DSG-NLM denoiser. The initialization $\x_0$ is obtained by median filtering the observed image $\b$ for inpainting and by median filtering the adjoint $\A^\top  \b$ for superresolution. We used $100$ iterations for both experiments. We notice that the reconstructions are of high quality, as already reported in the literature \cite{sreehari2016plug,zhang2017learning, CWE2017}. 

We have shown that PnP-ISTA and PnP-ADMM using symmetric denoisers are contractive for inpainting, deblurring, and superresolution.
Therefore, it follows from the contraction mapping theorem that for an arbitrary initialization, the PnP iterates should converge to the unique fixed point of the PnP operator. 
This is demonstrated in Fig.~\ref{fig:three-initializations} for deblurring using the symmetric GMM denoiser \cite{Teodoro2019PnPfusion}. Starting from three different initializations, PnP-ADMM is observed to converge to the same reconstruction; PnP-ISTA also exhibits similar behavior.
 
\section{Conclusion}
\label{sec:conc}
We provided new insights into the convergence of PnP for linear inverse problems. In particular, using the contraction mapping theorem, we established linear convergence of PnP-ISTA and PnP-ADMM with symmetric denoisers. We numerically validated our results on inpainting, deblurring, and superresolution. Furthermore, we established linear convergence of PnP-ISTA and PnP-ADMM for inpainting with nonsymmetric kernel denoiser.

We can establish contractivity under additional assumptions on $\A$ and $\W$ for deblurring and superresolution with kernel denoisers. However, these additional assumptions are rarely met in practice, so we did not report these results.
Establishing convergence for deblurring and superresolution with nonsymmetric kernel denoisers thus remains an open problem. 

\appendix
\section{Appendix}

\textbf{Notation}: $\Sy^n$ denotes the space of $(n \times n)$ symmetric matrices; $\Sy^n_+$ and $\Sy^n_{++}$ denote the set of $(n \times n)$ symmetric positive semidefinite and positive definite matrices; and $\sigma(\T)$ denotes the spectrum of $\T$.
\subsection{Proof of Lemma \ref{lemma:contractivity}}
\begin{remark}
\label{rmk:invariant-space}
A subspace $\Xr \subseteq \Re^n$ is said to be $\T$-invariant if $\T\x \in \Xr$ for all $\x \in \Xr$. 
Notice that in Lemma \ref{lemma:contractivity}, $\Ur$ (resp. $\Vr$) is $\M$-invariant (resp. $\N$-invariant). \fd
\end{remark}
The following proposition is used in the proof of Lemma \ref{lemma:contractivity}. 
\begin{proposition}
\label{prop:contractivity}
Let $\T \in \Sy^n$ be such that $\|\T\|_2 \leqslant 1$. Let $\Yr \subseteq \Re^n$ be the space spanned by eigenvectors of $\T$ corresponding to eigenvalues $\pm 1$. 
Then $\|\T\z\|_2 < \|\z\|_2$ for all $\z \in \Re^n \setminus \Yr$.
\end{proposition}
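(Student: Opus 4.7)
The plan is to prove Proposition \ref{prop:contractivity} directly via the spectral decomposition of $\T$, which is available because $\T \in \Sy^n$. The key observation is that the characterization of $\Yr$ as the span of eigenvectors with eigenvalues $\pm 1$ is precisely the subspace on which $\T$ acts isometrically, while on its orthogonal complement $\T$ must contract strictly because every remaining eigenvalue has modulus strictly less than $1$.

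First, I would invoke the spectral theorem to write $\T = \sum_{i=1}^n \lambda_i \v_i \v_i^\top$, where $\{\v_1,\dots,\v_n\}$ is an orthonormal eigenbasis of $\Re^n$ with corresponding eigenvalues $\lambda_i \in [-1,1]$ (the bound $|\lambda_i| \leqslant 1$ follows from $\|\T\|_2 \leqslant 1$). Let $I_1 = \{i : |\lambda_i| = 1\}$ and $I_2 = \{i : |\lambda_i| < 1\}$, so that $\Yr = \mathrm{span}\{\v_i : i \in I_1\}$ and $\Yr^\perp = \mathrm{span}\{\v_i : i \in I_2\}$.

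Next, for any $\z \in \Re^n \setminus \Yr$, expand $\z = \sum_{i=1}^n c_i \v_i$ with $c_i = \v_i^\top \z$. Since $\z \notin \Yr$, at least one coefficient $c_j$ with $j \in I_2$ is nonzero. A direct computation using orthonormality gives
\begin{equation*}
\|\z\|_2^2 - \|\T\z\|_2^2 \;=\; \sum_{i=1}^n (1 - \lambda_i^2) c_i^2 \;=\; \sum_{i \in I_2} (1 - \lambda_i^2) c_i^2 \;\geqslant\; (1-\lambda_j^2)\, c_j^2 \;>\; 0,
\end{equation*}
since $1 - \lambda_i^2 = 0$ for $i \in I_1$ and $1 - \lambda_j^2 > 0$. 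This yields $\|\T\z\|_2 < \|\z\|_2$, as required.

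There is no real obstacle here; the statement is essentially a restatement of the spectral theorem once $\Yr$ is correctly identified as the $\pm 1$ eigenspace. The only minor care point is to note that $\Yr$ is indeed $\T$-invariant (as remarked in Remark \ref{rmk:invariant-space}) and that its orthogonal complement consists precisely of the span of eigenvectors with $|\lambda_i| < 1$, which is guaranteed by the orthogonality of eigenspaces for distinct eigenvalues of a symmetric matrix.
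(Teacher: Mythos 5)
Your proof is correct and follows essentially the same route as the paper's: both decompose $\Re^n$ orthogonally into $\Yr$ and the span of eigenvectors with $|\lambda|<1$, and observe that the component outside $\Yr$ is strictly contracted. Your coordinate-wise computation of $\|\z\|_2^2 - \|\T\z\|_2^2$ simply makes explicit the step that the paper states more abstractly ("it follows from the construction of $\Xr$ that $\|\T\x\|_2 < \|\x\|_2$").
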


\begin{proof}
Since $\|\T\|_2 \leqslant 1$, $|\lambda| \leqslant 1$ for all $\lambda \in \sigma(\T)$.   
Let $\Xr \subseteq \Re^n$ be the subspace spanned by eigenvectors of $\T$ corresponding to $\lambda \in \sigma(\T)$ such that $|\lambda| < 1$.
%i.e., 
%\begin{equation*}
%\Xr = \Span\, \left\{\x \in \Re^n \setminus \{\ze\} : \T\x = \lambda \x \mbox{ and } |\lambda| < 1 \right\}
%\end{equation*}
Note that $\Xr$ and $\Yr$ are $\T$-invariant subspaces; moreover, since $\T \in \Sy^n$, $\Xr^\perp = \Yr$. 
Therefore, for any arbitrary $\z \in \Re^n \setminus \Yr$, there exist unique $\x \in \Xr \setminus \{\ze\}$ and $\y \in \Yr$ such that $\z=\x+\y$. 
Now, since $\Xr$ and $\Yr$ are orthogonal and $\T$-invariant, we have
\begin{equation}
\label{eq:pythagoras}
\|\z\|_2^2 = \|\x\|_2^2 + \|\y\|_2^2 \quad\mbox{and}\quad \|\T\z\|_2^2 = \|\T\x\|_2^2 + \|\T\y\|_2^2.
\end{equation}
From the definition of $\Yr$, we have $\|\T\y\|_2 = \|\y\|_2$.
Also, since $\x \neq \ze$, it follows from the construction of $\Xr$ that $\|\T\x\|_2 < \|\x\|_2$. 
Thus, using \eqref{eq:pythagoras}, we have $\|\T\z\|_2 < \|\z\|_2$. Since $\z$ is arbitrary, this completes the proof.
\end{proof}

\begin{remark}
In Proposition \ref{prop:contractivity}, note that $\|\T\|_2 < 1$ if and only if $\Yr = \{\ze\}$. 
While in general, in the context of Proposition \ref{prop:linear-convergence}, $\|\Q\| < 1$ if and only if $\|\Q\x\| < \|\x\|$ for all $\x \in \Re^n \setminus \{\ze\}$.  \fd
\end{remark}

\begin{proof}[Proof of Lemma \ref{lemma:contractivity}]
$(\Leftarrow)$: For any arbitrary $\z \in \Re^n \setminus \{\ze\}$, there are two possibilities: (i) $\z \in \Vr$, (ii) $\z \notin \Vr$. 
In case-(i), since $\z \neq \ze$, it follows from the definition of $\Vr$ that $\N\z \in \Vr \setminus \{\ze\}$; in fact, $\|\N\z\|_2=\|\z\|_2$. 
Now, since $\Ur \cap \Vr = \{\ze\}$, we can conclude that $\N\z \notin \Ur$. 
Subsequently, by Proposition \ref{prop:contractivity}, we have $\|\M\N\z\|_2 < \|\N\z\|_2 = \|\z\|_2$.
In case-(ii), it follows from the hypothesis and Proposition \ref{prop:contractivity} that $\|\M\N\z\|_2 \leqslant \|\N\z\|_2 < \|\z\|_2$. 
This shows that $\|\M\N\z\|_2 < \|\z\|_2$ for all $\z \in \Re^n \setminus \{\ze\}$; thus, $\|\M\N\|_2 < 1$.

$(\Rightarrow)$: Suppose there exists $\ze \neq \x \in \Ur \cap \Vr$. Since $\x \in \Vr$, it follows from the definition of $\Vr$ that $\x = \y+\z$ where $\y \in \cN(\I-\N)$ and $\z \in \cN(\I+\N)$; furthermore, as $\N \in \Sy^n$, we have $\|\x\|^2=\|\y\|_2^2 + \|\z\|_2^2$. Let $\w := \y-\z$; notice that $\N\w=\x$. 
%%%%% $\N\u = \x \neq \ze$ %%%%%
Moreover, since $\N\w=\x \in \Ur$ and $\|\M\u\|_2 = \|\u\|_2$ for all $\u \in \Ur$, we get  
\begin{equation*}
\|\M(\N\w)\|_2^2 = \|\N\w\|_2^2 = \|\x\|_2^2 = \|\y\|_2^2 + \|\z\|_2^2 = \|\w\|_2^2,
\end{equation*}
This contradicts $\|\M\N\|_2 < 1$.
\end{proof}

\subsection{Proof of Theorem \ref{thm:PnP-ISTA_contractivity}}
\begin{proof}
Note that since $\A^\top \! \A \in \Sy^n_+$, it follows from the spectral mapping theorem that if $0 < \gamma < 2/\varrho(\A^\top \! \A)$, then $\sigma(\G) \subset (-1,1]$, where $\G$ is given by \eqref{eq:def-PG}	. Therefore, $\G$ being symmetric, we have $\|\G\|_2 \leq1 $. Recall that $\|\W\|_2 = 1$. 
We now apply Lemma \ref{lemma:contractivity} with $\M=\W$ and $\N=\G$; notice that in this case, $\Ur = \cN(\I-\W)$ and $\Vr = \cN(\A^\top \! \A) = \cN(\A)$.
\end{proof}

\subsection{Proof of Theorem \ref{thm:PnP-ADMM_contractivity}}
\begin{remark}
\label{rmk:F-&-V-spectrum}
Notice from \eqref{eq:def-VF} that $\F \in \Sy^n$. We can conclude using the spectral mapping theorem that $\sigma(\F) \subseteq (-1, 1]$ for all $\rho > 0$; consequently, $\|\F\|_2 \leqslant 1$. Also, since $\sigma(\W) \subseteq [0,1]$ and $\W \in \Sy^n$, we have $\sigma(\V) \subseteq [-1, 1]$ and $\|\V\|_2 \leqslant 1$. \fd
\end{remark}

\begin{remark}
\label{rmk:F-&-V-contractivity}
In the context of Proposition \ref{prop:contractivity}, the corresponding subspaces for $\F$ and $\V$ are $\cN(\A)$ and $\cN(\I-\W) \oplus \cN(\W)$, respectively.
%%%%%%%%%% Above claims are proved in the supplement of SPL (Proof of Theorem 2) %%%%%%%%%%
Thus, $\|\F\z\|_2 < \|\z\|_2$ for all $\z \in \Re^n \setminus \cN(\A)$, and $\|\V\z\|_2 < \|\z\|_2$ for all $\z \in \Re^n \setminus \big(\cN(\I-\W) \oplus \cN(\W)\big)$. \fd
\end{remark}
\begin{proof}
$(\Rightarrow)$: Suppose there exists $\ze \neq \x \in \cN(\I-\W) \cap \cN(\A)$. 
From  $\x \in \cN(\I-\W)$, we get $\V\x = \x$; and from $\x \in \cN(\A)$, we have $\F\x= \x$.
Consequently, using \eqref{eq:def-R}, we get $\J\x=\x$ and $\R\x=\x$, which contradicts $\|\R\|_2 < 1$.

$(\Leftarrow)$: Let $\Wr := \cN(\I-\W) \oplus \cN(\W)$. For any $\x \in \Re^n \setminus \{\ze\}$, we consider the following exhaustive cases.  
\begin{description}
\item[Case $(\x \notin \Wr).$] It follows from Remarks \ref{rmk:F-&-V-spectrum} and \ref{rmk:F-&-V-contractivity} that 
\begin{equation*}
\|\J\x\|_2 \leqslant \|\V\x\|_2 < \|\x\|_2.
\end{equation*}
Therefore, using \eqref{eq:def-R}, $\|\R\x\|_2 < \|\x\|_2$. 

\item[Case $(\x \in \Wr).$] This means $\x=\y+\z$ where $\y \in \cN(\I-\W)$ and $\z \in \cN(\W)$. There are two subcases: (i) $\V\x \notin \cN(\A)$, and (ii) $\V\x \in \cN(\A)$. In subcase-(i), it follows from \eqref{eq:def-R} and Remark \ref{rmk:F-&-V-contractivity} that $\|\J\x\|_2 < \|\V\x\|_2 \leqslant \|\x\|_2$;
%\begin{equation*}
%\|\J\x\|_2 < \|\V\x\|_2 \leqslant \|\x\|_2.
%\end{equation*}
thus, $\|\R\x\|_2 < \|\x\|_2$. 

In subcase-(ii), since $\cN(\I-\W) \cap \cN(\A) = \{\ze\}$, we claim that $\z \neq \ze$. Note that $\V\x= \y-\z$. Thus, if $\z = \ze$, we have $\V\x=\y \in \cN(\I-\W) \cap \cN(\A)$ which contradicts $\cN(\I-\W) \cap \cN(\A) = \{\ze\}$.
Notice that since $\V\x \in \cN(\A)$, we have $\F(\V\x)=\V\x$. Subsequently, using \eqref{eq:def-R} and $\V\x = \y-\z$,  we get
\begin{equation}
\label{eq:Rx}
\R\x = \frac{1}{2}(\x + \J \x) =  \frac{1}{2}(\x + \V \x) = \y.
\end{equation}
Since $\W \in \Sy^n$, we have $\|\x\|^2=\|\y\|_2^2 + \|\z\|_2^2$. Therefore, it follows from \eqref{eq:Rx} and $\z \neq \ze$ that $\|\R\x\|_2 < \|\x\|_2$.
\end{description}
We have shown that $\|\R\x\|_2 < \|\x\|_2$ for all $\x \in \Re^n \setminus \{\ze\}$; hence, $\|\R\|_2 < 1$.
\end{proof}

\begin{remark}
\label{rmk:PnP-ADMM_linear-convergence}
If $\|\R\|_2 < 1$, then it follows from Proposition \ref{prop:linear-convergence} that $\{\u_k\}$ in \eqref{eq:uk} convergences linearly to some $\u^* \in \Re^n$. 
This indeed implies that $\{\y_k\}$ converges linearly to $\y^*:=\W\u^*$; note that since $\y_k = \W\u_k$ and $\|\W\|_2 = 1$, we have $\|\y_k - \y^*\|_2 \leqslant\|\u_k - \u^*\|_2$. \fd
\end{remark}

\subsection{Proof of Theorem \ref{thm:PnP-kernel}}
\begin{proof}
It follows from Remark \ref{rmk:D-norm} that $\|\P\|_\D= \|\D^\frac{1}{2} \P \D^{-\frac{1}{2}}\|_2$, where $\D^\frac{1}{2} \in \Sy^n_{++}$ is the symmetric square root of $\D$. 
It follows from \eqref{eq:def-PG} and \eqref{eq:A-inpaint} that $\G$ is diagonal for inpainting.
Subsequently, using \eqref{eq:def-PG}, we get
\begin{equation}
\label{eq:P-simplification}
\D^\frac{1}{2} \P \D^{-\frac{1}{2}} = \big(\D^\frac{1}{2} \W \D^{-\frac{1}{2}}\big) \big(\D^\frac{1}{2} \G \D^{-\frac{1}{2}}\big) = \W_0 \G,
\end{equation}
where $\W_0 := \D^\frac{1}{2} \W \D^{-\frac{1}{2}}$. 
Notice that as $\W=\D^{-1}\K$ is a kernel denoiser, we have $\W_0 = \D^{-\frac{1}{2}}  \K \D^{-\frac{1}{2}} \in \Sy^n_+$. Moreover, since $\W_0$ is similar to $\W$, we have $\|\W_0\|_2=\varrho(\W_0)=\varrho(\W)=1$. 
Also, it follows from \eqref{eq:def-PG} and \eqref{eq:A-inpaint} that for inpainting, $\|\G\|_2 \leqslant 1$ if $\gamma \in (0,2)$. 
To show that $\|\D^\frac{1}{2} \P \D^{-\frac{1}{2}}\|_2 < 1$, we use Lemma \ref{lemma:contractivity} with $\M=\W_0$ and $\N=\G$. 
In this setting, $\Ur = \cN(\I-\W_0) = \Span\,\{\q\}$, where $\q := \D^\frac{1}{2} \e$, and $\Vr = \cN(\A)$. 
Since $\Omega \neq \emptyset$ and all the components of $\q$ are nonzero, it follows from \eqref{eq:A-inpaint} that $\q \notin \cN(\A)$. 
Therefore, we have from \eqref{eq:P-simplification} and Lemma \ref{lemma:contractivity} that $\|\P\|_\D = \|\W_0\G\|_2 <1$; and hence, by Proposition \ref{prop:linear-convergence}, PnP-ISTA converges linearly. 

To prove $\|\R\|_\D < 1$, it suffices to show that $\|\J\|_\D < 1$, where $\J=\F\V$ (see~\eqref{eq:def-R}). Moreover, for inpainting, $\F$ is diagonal (see~\eqref{eq:def-VF}). Therefore, using \eqref{eq:def-R}, we can write
\begin{align}
\label{eq:J-simplification}
\D^{\frac{1}{2}} \J \D^{-\frac{1}{2}} = \big(\D^{\frac{1}{2}} \F \D^{-\frac{1}{2}}\big) \big(\D^{\frac{1}{2}}\V \D^{-\frac{1}{2}}\big)  = \F \big(2\W_0 - \I\big),
\end{align}
where $\W_0 = \D^{\frac{1}{2}} \W \D^{-\frac{1}{2}} \in \Sy^n_+$ (as explained earlier). 
It follows from the expression of $\W_0$ and the assumed nonsingularity of $\W$ that $\sigma(2\W_0 - \I) \subset (-1, 1]$.
Also, recall from the proof of Theorem \ref{thm:PnP-ADMM_contractivity} that $\sigma(\F) \subset (-1, 1]$ for all $\rho > 0$. 
We now use Lemma \ref{lemma:contractivity} with $\M=\F$ and $\N=(2\W_0 - \I)$ to show that $\|\D^\frac{1}{2} \J \D^{-\frac{1}{2}}\|_2 < 1$. Recall from the proof of Theorem \ref{thm:PnP-ADMM_contractivity} that for $\M=\F$, we have $\Ur=\cN(\A)$. 
Also, notice that for $\N=(2\W_0 - \I)$, since $\sigma(2\W_0 - \I) \subset (-1, 1]$, we have $\Vr = \cN(2\W_0-2\I) = \Span\,\{\q\}$, where $\q = \D^\frac{1}{2} \e$. As explained before, $\q \notin \cN(\A)$. 
Thus, using \eqref{eq:J-simplification} and Lemma \ref{lemma:contractivity}, we have $\|\J\|_\D = \|\F(2\W_0 - \I)\|_2 < 1$; subsequently, $\|\R\|_\D < 1$. Therefore, on similar lines of Remark \ref{rmk:PnP-ADMM_linear-convergence}, we can conclude from Proposition \ref{prop:linear-convergence} that PnP-ADMM converges linearly; note that as explained earlier, $\|\W\|_\D = \|\W_0\|_2 = 1$.
\end{proof}

\bibliographystyle{unsrt}
\bibliography{refs}

\end{document}